\documentclass[runningheads]{llncs}
\usepackage{tikz}
\usepackage{eso-pic}

\usepackage{math}
\usepackage[T1]{fontenc}
\usepackage{xcolor}
\usepackage{graphicx}
\usepackage{hyperref}
\usepackage{color}

\usepackage{versions}

\makeatletter

\let\c@lemma\undefined
\let\c@proposition\undefined
\let\c@corollary\undefined
\let\c@definition\undefined
\let\c@example\undefined
\let\c@remark\undefined
\makeatother

\spnewtheorem{lemma}[theorem]{Lemma}{\bfseries}{\itshape}
\spnewtheorem{proposition}[theorem]{Proposition}{\bfseries}{\itshape}
\spnewtheorem{corollary}[theorem]{Corollary}{\bfseries}{\itshape}
\spnewtheorem{definition}[theorem]{Definition}{\bfseries}{\rmfamily}
\spnewtheorem{example}[theorem]{Example}{\bfseries}{\rmfamily}
\spnewtheorem{remark}[theorem]{Remark}{\bfseries}{\rmfamily}

 \includeversion{longversion}
 \excludeversion{shortversion}

\begin{document}
\title{Runtime Analysis of the $(\mu + 1)$-ES in a Homogenous Progress Model}
\titlerunning{Runtime Analysis of $(\mu + 1)$-ES}
%
\author{Johannes Lengler\inst{1}\orcidID{0000-0003-0004-7629} \and\\
Raghu Raman Ravi\inst{1}\orcidID{0000-0003-3641-1824}}
\authorrunning{J. Lengler, R. Ravi}
%
\institute{Eidgen\"{o}ssische Technische Hochschule Z\"{u}rich, Switzerland \email{\{johannes.lengler,raghu.ravi\}@inf.ethz.ch}.}
\maketitle              
\begin{abstract}
We introduce a new simple model to study the fitness progress of Evolution Strategies (ES) in generic problems. In this model, we bypass the underlying fitness landscape and assume that the mutation of any individual produces an offspring whose fitness relative to the parent is given by an invariant distribution $Z$, such as a mean-shifted Gaussian. This serves as a prototypical model for the optimisation landscape when an evolution algorithm operates far from the global optimum. This simple model can be used to approximate the optimisation process for problems where it is intractable to model the exact fitness function, including tasks such as hyperparameter tuning in machine learning models. 

We rigorously analyse the expected growth rate $\Rc_{\mu}$ of the continuous steady-state $(\mu+1)$-ES in this model. Unlike comma-selection strategies, the steady-state $(\mu+1)$-ES maintains overlapping generations, introducing complex mathematical dependencies among surviving parents that make it harder to analyse. We give a general technique to analyse the the $(\mu + 1)$-ES by constructing modified processes whose growth rates provably sandwich that of the original process. These modified processes are then easier to analyse but still close enough to the true process to give a tight bound on the expected growth rate. When $Z = \Nc(-\delta, 1)$ and $\mu \le e^{\delta}$, we show that $\Rc_{\mu} = \frac{\log^{1 + o(1)} \mu}{\mu} \Rc_1$.

\keywords{Evolution Strategies  \and Runtime Analysis \and{Population Size}}
\end{abstract}
\section{Introduction}

Evolutionary Algorithms are a prominent class of randomised optimisation heuristics in continuous search spaces intended to serve as a black-box optimisations tools. Inspired by the principles of biological evolution and natural selection, these algorithms iteratively maintain a population of candidate solutions, exploring the search space through stochastic mutation and exploiting favourable traits through a ``survival of the fittest'' selection mechanism. 

In the discrete setting, algorithms such as the $(1 + 1)$-EA, the $(\mu + 1)$-EA and the $(\mu + 1)$-GA have been theoretically analysed for various benchmarks ranging from simple theoretical functions such as \textsc{Onemax} and \textsc{LeadingOnes}~\cite{doerr2019theory} to complex classes of functions such as monotone functions~\cite{lengler2019general} or the SLO hierarchy~\cite{dang2024slo}. Likewise, in the continuous setting, Evolution Strategies (ES) have been studied for various benchmarks, ranging from simple sphere functions to complex classes like locally strong convex and Lipschitz continuous functions~\cite{morinaga2023convergence}. Traditionally, the main focus has been on the convergence rate close to the optimum, culminating in the recent proof of linear convergence for the state-of-the-art CMA-ES~\cite{hansen2001completely,hansen2016cma} by Armand Gissler~\cite{gissler2024linear}. In recent years, attention has also turned to non-asymptotic analysis of ES~\cite{akimoto2018drift} which yields insight into which improvements can be obtained far away from the optimum. Arguably, this is a domain that is very relevant for practice. For example, in tasks like hyperparameter tuning for machine learning, each fitness evaluation corresponds to a costly training process, and thus optimisation is not pursued deeply into the basin of attraction of a local or global optimum. Unfortunately, non-asymptotic analysis remains challenging, and results could only be obtained for very simple benchmark functions.

Based on his experience with such machine learning applications, Esteban Real recently suggested a simple new model for the optimisation progress in a fitness region of steady, but slow progress [personal communication]. The model bypasses the need for an explicit fitness function. 
%
%
%
Instead, he assumes that the fitness of each individual (which we call its \textit{phenotypical fitness} or simply \emph{phenotype}) can be described as the sum of its inherent, but unknown \emph{genotypical fitness} (or simply \emph{genotype}) and some noise term, where the noise reflects the inherent uncertainty of the fitness evaluation process. Both phenotype and genotype are real numbers. The key assumption is that the distribution of the genotypical fitness improvement $Z := g_{o}-g_{p}$ is \emph{translation-invariant} (i.e., independent of $g_{p}$), where $g_{p}$ is the genotypical fitness of the parent and $g_{o}$ is the genotypical fitness of the offspring. This captures the regime that slow, but steady progress can be achieved by mutation, which may be a reasonable assumption for progress while the optimisation process is not yet converging. 

Concretely, Real suggests to use a Gaussian distribution $\mathcal{N}(-\delta, 1)$ for $Z$, where the negative mean reflects the fact that a random mutation typically makes the genotype worse. Moreover, he also suggests to model the noise as another Gaussian $\mathcal{N}(0, \sigma^2)$. Hence, starting from a parent of genotype $g_{p}$, each offspring will have genotype $g_{o} = g_{p}+ \mathcal{Z}$ where $\mathcal Z \sim \mathcal{N}(-\delta, 1)$ and phenotype $p_{o} = g_{o} + \mathcal{M}$, where $\mathcal M \sim \mathcal{N}(0, \sigma^2)$. Note that only the phenotypical fitness is known to the algorithm, so selection is based on $p_{o}$, not on $g_{o}$. The quantity of interest for us then is the expected long term growth rate of the best genotype, which we denote by $\Rc_{\mu}$. We give a full formal description of the model in Section~\ref{sec:not_prelim}.

Interestingly, Real observed experimentally that for the $(\mu + 1)$-ES the progress rate is unimodal with respect to the population size $\mu$, meaning that there is a non-trivial optimal choice for $\mu$ that depends on the parameters $\delta$ and $\sigma$ of the two Gaussian distributions. He thus asks what the optimal population size is. We confirm his findings with our own experiments in Section~\ref{sec:simulations}. Unfortunately, in this work we cannot yet answer his question in the full model. Instead, we restrict ourselves to the noise-free case $\sigma =0$. In this case, we determine the expected growth rate as a function of $\mu$ and $\delta$, up to a factor of $O(\log \log \mu)$. The result suggests, confirmed by experiments, that the optimal population size in the noise-free case is $\mu=1$, for which we give an exact formula.


Although the model seems simple, the analysis of the $(\mu + 1)$-ES is surprisingly non-trivial, even in the absence of noise. We develop novel techniques to analyse $\Rc_{\mu}$ for general distributions of $Z$ (all in the absence of noise) and give tight bounds for it when the offspring distribution is given by a mean shifted Gaussian. We hope that those techniques can serve as a stepping stone for analyzing the general case with noise in future work. 

\subsection{Results and Proof Strategies}\label{sec:strategy}
In this paper, we provide a mathematically rigorous analysis of the expected progress rate of the $(\mu+1)$-ES in the noise-free model when the offspring distribution $Z$ is a mean-shifted standard Gaussian distribution $\mathcal{N}(-\delta, 1)$. For $\mu = 1$, the expected steady-state growth rate is exactly $\Rc_1(\delta) = \phi(\delta) - \delta\Phi^+(\delta)$, where $\phi$ and $\Phi^+$ are respectively the probability density function (PDF) and the complementary cumulative density function (CDF) of the standard normal distribution. 

For larger $\mu$, the growth rate scales roughly as $(\log \mu) /\mu$ while $\mu \le e^{\delta}$, and like $1/\mu$ afterwards. More precisely, let $m:= \min\{\mu,e^\delta\}$. Then we prove that the expected growth rate satisfies $\Rc_{\mu} = O(\tfrac{\log m}{\mu} \Rc_1(\delta))$ and $\Rc_{\mu} = \Omega(\tfrac{\log m}{\mu \log\log m} \Rc_1(\delta))$. Note that upper and lower bound match up to a factor of $O(\log \log m)$. In the following paragraphs, we will discuss the more interesting case $\mu \le e^{\delta}$, where $m=\mu$. The other case is simpler, but follows by similar arguments.

\paragraph{Intuition.} Our proofs are rigorous and self-contained, but they were guided by an intuition for the distribution that the genotypes in a population of size $\mu$ should approximatively maintain: we expect that the $i^{th}$ fittest individual in the population has fitness roughly $(\log i)/\delta$ smaller than the maximal fitness $g^*$ in the population. The reason is that from a typical parent, the probability of generating a fitness of at least $g^* - (\log i)/\delta$ is about $i$ times larger than the probability of generating a fitness larger than $g^*$. Crucially, this ratio is (approximatively) \emph{independent} of the exact fitness of the parent. 
Hence, in the range above $g^* - (\log i)/\delta$ we should expect $i$ times as many individuals than in the range above $g^*$, which translates into the $i^{th}$ fittest individual having fitness roughly $g^* - (\log i)/\delta$. This has a remarkable consequence: the rate by which the $i^{th}$ fittest individual generates an offspring fitter than $g^*$ scales like $1/(i\mu)$, where the factor $1/\mu$ stems from the rate $1/\mu$ at which each individual is picked for mutation. This translates into a growth rate of $\Rc_1(\delta)/(i\mu)$ from the $i^{th}$ fittest individual. Summing over all $i$ yields the growth rate of $\approx \Rc_1(\delta)\log\mu/\mu$. Note that this rate is by a factor of $\log \mu$ higher than the rate that only mutations of the fittest individual would generate. So most offspring that improve over the fittest individual are not direct offspring of that individual, but rather the rate is composed of non-trivial contributions of all potential parents in the population.

\paragraph{Upper Bound.} For the upper bound, we define a simpler process that always maintains a bounded gap of at most $(\log \mu)/\delta$ between the fittest and least fit individual, by artificially increasing the fitness to that level for weaker individuals. It is easy to show that the expected growth rate of this process strictly dominates that of the original process. Then, we introduce an exponential potential, $\Psi_t = \sum_{i = 1}^\mu e^{\theta g_i}$, based on the fitness values $g_i$, and bound the expected increase in this potential over time. Consequently, we bound the growth rate of the fittest individual, which translates into an upper bound for the original process.

\paragraph{Lower Bound.} For the lower bound, we again define a simpler process that always waits for the less fit individuals to catch-up before ``accepting'' an improvement to the best individual. More precisely, we wait until the $i^{th}$ fittest individual has fitness at least $g^* - (\log i)/\delta$, and in fact we artificially lower its fitness to that level. We show that this catch-up phase has a waiting time that scales like $\tfrac{\mu \log\log \mu}{\log\mu} $ with $\mu$. After this phase, we keep the position frozen until some offspring is fitter than $g^*$, which starts the next catch-up phase. The waiting time for this last step scales like $\tfrac{\mu}{\log\mu}$, so it is up to a $\log \log \mu$ factor identical to the waiting time of the catch-up phase. The waiting times then translate into a growth rate of $\tfrac{\Rc_1(\delta)\log\mu}{\mu \log\log \mu}$. We show that this process gives a lower bound on the growth rate of the original process. 

\paragraph{} We note that both these techniques can be applied to more general distributions than just the Gaussian. In particular, in the lower bound we prove a general formula that holds for any distribution of $Z$. Moreover, the ideas could potentially be applied to discrete search spaces, too.

\begin{shortversion}
\paragraph{Note to reviewers.} As customary for theory submissions, we had to omit some proofs due to the page limit. Reviewers can find the full paper at~\cite{raghu2026noisy} for their own discretion. They are not expected to consult or to review the full version. 
\end{shortversion}

\subsection{Related Work}

The study of expected improvement in fitness, termed progress rate, was highly popularised by Rechenberg ~\cite{rechenberg1978evolutionsstrategien} and further formalised by Beyer \cite{beyer2001theory}. Classical models for Evolution Strategies include the sphere model ~\cite{jiang2018improved} and quadratic functions ~\cite{jagerskupper20061+}. For the $(1+1)$-ES ~\cite{jagerskupper20061+,jiang2018improved,glasmachers2020global} and $(1 + \lambda)$-ES ~\cite{auger2005convergence,jagerskupper2006probabilistic}, progress rates are well-documented because offspring generation steps are stochastically isolated. Over the past two decades, the theoretical computer science community has developed rigorous tools—most notably Drift Analysis~\cite{lengler2019drift}—to prove exact expected runtimes and progress rates. J{\"a}gersk{\"u}pper \cite{jagerskupper20061+} provided some of the first rigorous bounds for the $(1+1)$-ES on the sphere. More recently, Akimoto, Auger, and Glasmachers \cite{akimoto2018drift} successfully applied drift theory to continuous domains, analyzing the expected hitting time of the $(1+1)$ -ES with success-based step-size adaptation.

The $(\mu + 1)$-ES remains sparsely studied - J{\"a}gersk{\"u}pper and Witt ~\cite{jagerskupper2005rigorous} provided the first rigourous analysis of the $(\mu + 1)$-ES on the sphere model with the $1/5$-success rule. In contrast, the effect of Population on Evolutionary algorithms has been thoroughly investigated in discrete search spaces ~\cite{witt2006runtime,lengler2019exponential,lengler2021runtime}.

Our model is also closely related to the notion of $N$-particle branching random walks ~\cite{berard2014limiting,brunet2006phenomenological,brunet2007effect}, which is a stochastic particle model for simulating wavefront propagation.

\section{Notation and Preliminaries}
\label{sec:not_prelim}

We denote by $\ind{E}$ the indicator random variable for the event $E$. We define $(x)_+ \equiv \max \bc{x, 0}$. We denote by $\mathcal{N}\bp{m, \sigma^2}$ a normal distribution with mean $m$ and variance $\sigma^2$. The functions $\phi\bp{\cdot}$ and $\Phi\bp{\cdot}$ denote, respectively, the PDF and the CDF of the standard normal distribution. Furthermore, we denote by $\Phi^+\bp{\cdot} \equiv 1 - \Phi\bp{\cdot}$ the complementary CDF of the standard normal.

We now formally describe the general Progress Model for $(\mu + 1)$ Evolutionary Algorithms. Consider a population of $\mu \in \N$ individuals. Each individual $i \in \bs{\mu}$ is associated with a \textit{genotype} $g_i \in \R$ and a \textit{phenotype} $p_i \in \R$, which represent the inherent fitness and the observed fitness of the individual. At each time step $t \geq 1$, an individual $i_t$ is selected uniformly at random from the population. This individual produces an offspring with genotype $g_{o} = g_{i_t} + Z$, where $Z$ is random variable supported on the reals. Additionally, the phenotype of this offspring is given by $p_{o} = g_{o} + \Mc$, where $\Mc$ is random variable supported on the reals. Following reproduction, the population temporarily increases to $\mu + 1$ individuals. The individual with the lowest phenotype is then discarded, breaking ties arbitrarily, returning the population size to $\mu$. Our objective is to analyse the evolution of the fittest individual's genotype over successive generations. In any generation $t$, we denote the genotype of the fittest individual as $g^*\bp{t}$. We define the expected growth rate $\Rc_{\mu}$ of the process as $\liminf_{t \to \infty} \frac{\E{g^*\bp{t}} - g^*\bp{0}}{t}$.

\begin{note}
    In this paper, we only study the noise-free version of the model, i.e., when $\Mc = 0$ and hence the genotypes and phenotypes of the individuals coincide. Moreover, we may identify an individual with its genotype, so we will treat $g_i$ as an individual. We believe that this is only a first step towards studying the model in general, which exhibits a much richer and arguably more realistic behaviour.
\end{note} 

We now introduce some general tools and lemmas that we use later in the analysis. We begin with a bound on the expecting waiting time for sampling sufficiently large values from a distribution. We use this lemma in the lower bound section.

\begin{lemma}
    \label{lem:sample_time}
    Let $N$ be a positive integer and for all $1 \leq i \leq N$, let $x_i$ be real numbers. Let $Z$ be a random variable and $F(\cdot)$ be a lower bound on its complementary CDF. Let $T = \max_i (i/F(x_i))$. Suppose that we draw independent samples of $Z$ until the following event, $E$, occurs: for every $1 \leq i \leq N$, there are at least $i$ samples of value at least $x_i$. Then, the expected waiting time for $E$ is at most $8T$.
\end{lemma}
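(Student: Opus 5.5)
The plan is to bound the tail probabilities $\Pr[W > n]$ of the waiting time $W$ by a union bound over the $N$ requirements together with a Chernoff bound, and then sum the tails. Throughout I only use that the samples are independent and that $\Pr[Z \ge x_i] \ge F(x_i) \ge i/T$ for every $i$, which is just the definition $T = \max_i i/F(x_i)$. Since $F \le 1$, we also get $T \ge N \ge 1$; this will absorb rounding issues when passing between real and integer numbers of samples.

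\textbf{Step 1 (one requirement).} Fix a real $k \ge 1$ and let $n = \lceil kT\rceil$. For a single index $i$, the number $S_i$ of samples among the first $n$ with value at least $x_i$ is a sum of $n$ independent Bernoulli variables. Each has success probability at least $i/T$, so $\E{S_i} \ge ki$. The multiplicative Chernoff bound (lower tail) with deviation $1-1/k$ gives
\[
\Pr[S_i < i] \le \exp\!\Bigl(-\tfrac{ki}{2}\bigl(1-\tfrac1k\bigr)^2\Bigr) = \exp\!\Bigl(-\tfrac{i (k-1)^2}{2k}\Bigr).
\]
By stochastic domination, the same bound holds when the success probabilities exceed $i/T$.

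\textbf{Step 2 (all requirements).} A union bound over $i$ yields
\[
\Pr[W > \lceil kT\rceil] \le \sum_{i \ge 1} e^{-i(k-1)^2/(2k)}.
\]
For $k \ge 4$ we have $(k-1)^2/(2k) \ge (k-2)/2$, so the right-hand side is at most $\sum_i e^{-i(k-2)/2}$.

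\textbf{Step 3 (integrate the tails).} Write
\[
\E{W} = \sum_{n \ge 0} \Pr[W > n].
\]
I bound $\Pr[W > n]$ by $1$ for $n < 4T+1$. For larger $n$, I compare the sum with $T\int_4^\infty \Pr[W > \lceil kT\rceil]\,dk$ plus an additive $O(1)$, using that the tail is monotone in $n$. Integrating term by term gives
\[
\int_4^\infty e^{-i(k-2)/2}\,dk = \tfrac{2}{i}e^{-i}.
\]
Summing over $i$ gives $2\log\frac{1}{1-e^{-1}} < 1$. Hence $\E{W} \le 4T + T + 2 \le 8T$, using $T \ge 1$.

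\textbf{Main obstacle.} The only delicate point is this final bookkeeping: converting between $\lceil kT\rceil$ and the integer sum, and choosing the threshold (here $k=4$) so that the constant lands below $8$. The chosen threshold leaves ample slack. If the Chernoff form above turns out awkward for small $i$, a fallback is to treat $i=1$ separately via the geometric distribution (expected wait at most $T$ for the first success) and use Chernoff only for $i \ge 2$.
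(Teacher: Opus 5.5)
Your proof is correct. Its core matches the paper's: a lower-tail Chernoff bound for each requirement (at least $i$ samples of value at least $x_i$), then a union bound over $i$ that sums to a geometric series. The difference is in how the tail bound becomes a bound on $\E{W}$.

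The paper works at a single scale. With $4T$ samples each requirement fails with probability at most $e^{-9i/8}$, so a block of $4T$ samples fails with probability at most $1/2$. A block that on its own meets every requirement forces $E$, and disjoint blocks are independent. Hence the number of blocks needed is dominated by a geometric variable with mean at most $2$, which gives $8T$.

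You instead apply Chernoff at every scale $\lceil kT\rceil$, obtain $\Pr[W>\lceil kT\rceil]\le\sum_i e^{-i(k-2)/2}$, and integrate over $k\ge 4$ via $\E{W}=\sum_n\Pr[W>n]$.

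What each buys: the paper's restart argument is shorter and needs only one numerical failure probability. However, it glosses over the fact that $4T$ need not be an integer. Read literally, it gives $2\lceil 4T\rceil$, which can exceed $8T$ for small non-integral $T$; this is repairable because a larger block has a smaller failure probability. Your tail integration handles rounding explicitly and gives the sharper bound $5T+2\le 7T$.

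Your ``additive $O(1)$'' in the comparison step is in fact unnecessary. For every integer $n\ge 4T+1$ one has $\Pr[W>n]\le T\int_{(n-1)/T}^{n/T}\Pr[W>\lceil kT\rceil]\,dk$, because $\lceil kT\rceil\le n$ on that interval. These intervals are disjoint subsets of $[4,\infty)$.
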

\begin{proof}
    First, fix an arbitrary $1 \leq i \leq N$, and let $E_i$ be the event that there are at least $i$ samples with a value of at least $x_i$. Suppose that we draw $4T$ samples. The expected number of values greater than $x_i$ is at least $4T F(x_i) \geq 4 \cdot \frac{i}{F(x_i)} \cdot F(x_i) = 4i$. The event $E_i$ fails to occur if and only if the number of samples is less than $i$. Thus, by applying Chernoff bounds, the failure probability of the event $E_i$ is bounded by $\exp(-9i/8)$. 

    The failure probability of the global event $E = \cap_{i = 1}^N E_i$ when drawing $4T$ samples can be bounded via a union bound:
    \[ \Pr\bs{\neg E} \leq \sum_{i = 1}^N e^{-9i/8} \leq \sum_{i = 1}^{\infty} e^{-9i/8} = \frac{e^{-9/8}}{1 - e^{-9/8}} \leq \frac{1}{2}. \]
    Since the probability of success in any block of $4T$ samples is at least $1/2$, the expected waiting time for the event $E$ is at most $8T$. \qed
\end{proof}

\subsection{The Gaussian Distribution}

In the following, we give many bounds pertaining to the Gaussian distribution which use extensively in our proofs. 
\begin{shortversion}
    We defer the proofs of these lemmas to the full version of the paper.
\end{shortversion}

We begin with the following bound on the so called Mill's ratio, which is a direct consequence of the continued fraction representation given in ~\cite{shenton1954inequalities}.

\begin{lemma}
    \label{lem:mills_ratio_bounds}
    For all $x > 0$,
    \[ x < x + \frac{x}{x^2 + 2} < \frac{\phi\bp{x}}{\Phi^+\bp{x}} < x + \frac{1}{x}.\]
\end{lemma}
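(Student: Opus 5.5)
The bounds compare the Mills ratio $r(x) := \phi(x)/\Phi^+(x)$ with explicit functions, so I will turn each inequality into a statement about the sign of a single function of $x$ that vanishes at infinity.

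The first inequality $x < x + \frac{x}{x^2+2}$ is trivial for $x>0$.

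\textbf{Upper bound.} The claim $r(x) < x + 1/x$ is equivalent to $\Phi^+(x) > \frac{x}{x^2+1}\phi(x)$. Set $h(x) = \Phi^+(x) - \frac{x}{x^2+1}\phi(x)$. Since $\phi'(x) = -x\phi(x)$, a direct differentiation gives
\[ h'(x) = -\phi(x) - \frac{(1-x^2)}{(x^2+1)^2}\phi(x) + \frac{x^2}{x^2+1}\phi(x) = -\frac{2\phi(x)}{(x^2+1)^2} < 0. \]
Moreover $h(x)\to 0$ as $x\to\infty$. Hence $h$ is strictly decreasing to $0$, so $h>0$ on $(0,\infty)$.

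\textbf{Lower bound.} The claim $r(x) > x + \frac{x}{x^2+2} = \frac{x(x^2+3)}{x^2+2}$ is equivalent to $\Phi^+(x) < \frac{x^2+2}{x(x^2+3)}\phi(x)$. Set $k(x) = \frac{x^2+2}{x(x^2+3)}\phi(x) - \Phi^+(x)$ and write $q(x) = \frac{x^2+2}{x^3+3x}$. Then
\[ k'(x) = \phi(x)\bigl(q'(x) - x q(x) + 1\bigr). \]
I will compute $q'(x) = \frac{2x(x^3+3x) - (x^2+2)(3x^2+3)}{(x^3+3x)^2} = \frac{-x^4 - 3x^2 - 6}{(x^3+3x)^2}$. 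I also expect $1 - xq(x) = \frac{1}{x^2+3}$, so that
\[ q' - xq + 1 = \frac{-(x^4+3x^2+6) + x^2(x^2+3)}{x^2(x^2+3)^2} = \frac{-6}{x^2(x^2+3)^2} < 0. \]
Thus $k$ is strictly decreasing on $(0,\infty)$. Since $k(x)\to 0$ as $x\to\infty$ (both terms tend to $0$), it follows that $k>0$, which gives the strict lower bound.

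The only real obstacle is choosing the right auxiliary function, that is, one built from the rational function whose derivative simplifies. Once that choice is made, the argument reduces to the two algebraic simplifications above, and these are the step I would double-check most carefully. Alternatively, both bounds can be read off as consecutive convergents of the Laplace/Shenton continued fraction, as cited in the paper, but the monotonicity argument is self-contained.
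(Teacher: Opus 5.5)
Your proof is correct: both derivative computations check out, namely $h'(x) = -2\phi(x)/(x^2+1)^2$ and $q'(x) - xq(x) + 1 = -6/\bigl(x^2(x^2+3)^2\bigr)$. A function that is strictly decreasing on $(0,\infty)$ and tends to $0$ at infinity is strictly positive there, so you get both strict inequalities.

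The paper gives no proof of this lemma. It only cites the Laplace--Shenton continued fraction for $\Phi^+(x)/\phi(x)$, whose successive convergents $1/x$, $x/(x^2+1)$, $(x^2+2)/(x^3+3x)$, \dots\ alternately over- and under-estimate the true value. The stated bounds are the reciprocals of the second and third convergents.

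The two routes trade off as follows.
\begin{itemize}
\item The continued-fraction route yields the whole hierarchy of arbitrarily tight alternating rational bounds at once. However, it rests on non-trivial facts about convergence and alternation of those convergents, which the paper imports from the literature.
\item Your monotonicity argument establishes only the two bounds actually needed, and you must know in advance which rational function to compare against. In exchange it is fully self-contained: it uses nothing beyond $\phi'(x) = -x\phi(x)$ and the vanishing of both terms at infinity. It could replace the bare citation in the paper's long version.
\end{itemize}
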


We next give some bounds on the ratios of $\Phi^+(.)$.

\begin{lemma}
    \label{lem:ccdf_log_concavity}
    For any $x > y > 0$, the following bounds hold:
    \begin{enumerate}
        \item $\frac{\Phi^+(x - y)}{\Phi^+(x)} \geq e^{xy - y^2/2}$
        \item $\frac{\Phi^+(x + y)}{\Phi^+(x)} \geq e^{xy - y^2/2} \left(\frac{x}{x + y}\right)$
    \end{enumerate}
\end{lemma}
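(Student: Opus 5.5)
The plan is to work with the logarithm of the complementary CDF. Set $h(t) := \log \Phi^+(t)$. Since $\frac{d}{dt}\Phi^+(t) = -\phi(t)$, we have
\[ h'(t) = -\frac{\phi(t)}{\Phi^+(t)} =: -r(t), \]
where $r$ is the inverse Mill's ratio. Both parts of the lemma then come from integrating $r$ over an interval. On that interval we bound $r$ pointwise using Lemma~\ref{lem:mills_ratio_bounds}: the lower bound $r(t) > t$ for part 1, and the upper bound $r(t) < t + 1/t$ for part 2. This is valid because in both cases the interval lies in $(0,\infty)$: we have $x - y > 0$ since $x > y$, and $x > 0$.

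For part 1, write
\[ \log\frac{\Phi^+(x-y)}{\Phi^+(x)} = h(x-y) - h(x) = \int_{x-y}^{x} r(t)\,dt \;\ge\; \int_{x-y}^{x} t\,dt = \frac{x^2 - (x-y)^2}{2} = xy - \frac{y^2}{2}. \]
Exponentiating gives the claim.

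For part 2 there is a sign issue in the statement as written. The left-hand side $\Phi^+(x+y)/\Phi^+(x)$ is at most $1$. The right-hand side $e^{xy - y^2/2}\,\frac{x}{x+y}$ can exceed $1$; for example, $x = 2$, $y = 1$ gives $e^{3/2}\cdot\frac{2}{3} > 3$. So the inequality cannot hold as literally written, and I would prove the intended version
\[ \frac{\Phi^+(x+y)}{\Phi^+(x)} \ge e^{-xy - y^2/2}\,\frac{x}{x+y}. \]
Its proof is the mirror image of part 1:
\[ \log\frac{\Phi^+(x)}{\Phi^+(x+y)} = \int_{x}^{x+y} r(t)\,dt \;\le\; \int_x^{x+y}\Bigl(t + \frac1t\Bigr)dt = xy + \frac{y^2}{2} + \log\frac{x+y}{x}. \]
Negating and exponentiating gives the intended bound.

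The calculus here is routine, and no step is a real obstacle. The only points needing care are the sign of the exponent in part 2, which I would flag explicitly (most likely a typo for $-xy - y^2/2$), and checking that both integration ranges lie in $(0,\infty)$, where Lemma~\ref{lem:mills_ratio_bounds} applies. As a sanity check, the Gaussian-tail bound $\Phi^+(x+y) \ge \frac{x+y}{(x+y)^2+1}\,\phi(x+y)$ would give a comparable estimate directly. The integral route is cleaner, though, and matches part 1.
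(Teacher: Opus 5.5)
Your proof is correct and is essentially the paper's own argument: integrate $\frac{d}{dt}\log\Phi^+(t) = -\phi(t)/\Phi^+(t)$ over $[x-y,x]$ and $[x,x+y]$ using the two sides of Lemma~\ref{lem:mills_ratio_bounds}. You are also right that part 2 has a sign typo, since the paper's proof itself concludes $e^{-xy-y^2/2}\frac{x}{x+y}$, the form used later in Theorem~\ref{thm:low_bound}; one harmless slip is that $e^{3/2}\cdot\frac{2}{3}\approx 2.99$, not $>3$, but it still exceeds $1$, so your counterexample stands.
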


\begin{longversion}
\begin{proof}
    We begin by noting that the derivative of the natural logarithm of $\Phi^+(t)$ is $\tfrac{d}{dt} \log \Phi^+(t) = \frac{-\phi(t)}{\Phi^+(x)}$. Then, by Lemma \ref{lem:mills_ratio_bounds}, we deduce that
    \[ -t - \frac{1}{t}< \frac{d}{dt} \log \Phi^+(t) < -t.\]

    \textbf{(Part 1):} 
    Integrating the right hand side of the inequality from $x - y$ to $x$ yields:
    \begin{align*}
    \log \Phi^+(x) - \log \Phi^+(x - y) & = \int_{x-y}^x \frac{d}{dt} \log \Phi^+(x) dt < \int_{x-y}^x -t dt = \left[ -\frac{t^2}{2} \right]_{x-y}^x \\
    & = -xy + \frac{y^2}{2}. 
    \end{align*}
    Rearranging the terms and exponentiating gives the strict lower bound:
    \[ \frac{\Phi^+(x - y)}{\Phi^+(x)} > e^{xy - y^2/2}. \]
    
    \textbf{(Part 2):}
    Similar to the previous part, we integrate the other side of the inequality from $x$ to $x + y$:
    \[ \log \frac{\xi(x + y)}{\xi(x)} > \int_{x-y}^x \left( -t - \frac{1}{t} \right) dt = \left[ -\frac{t^2}{2} - \log t \right]_{x}^{x + y} = -xy - \frac{y^2}{2} - \log\left( \frac{x + y}{x} \right).\]
    Exponentiating both sides yields the final upper bound:
    \[ \frac{\Phi^+(x + y)}{\Phi^+(x)} > e^{-xy - y^2/2} \left(\frac{x}{x + y}\right). \] \qed
\end{proof}
\end{longversion}

Next, we give the defining characterisation of the function $\xi(x) \equiv \phi\bp{x} - x \Phi^+\bp{x}$, which is central in our analysis.
\begin{lemma}
    \label{lem:integral_ccdf_normal}
    \[  \int_{x}^{\infty} \Phi^+\bp{z} dz = \int_{x}^{\infty} \bp{z - x}\phi\bp{z} dz = \phi\bp{x} - x \Phi^+\bp{x} \equiv \xi\bp{x}.\]
\end{lemma}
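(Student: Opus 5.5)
The plan is to prove the two equalities separately and then identify the common value with $\xi(x)$.

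\emph{Second equality.} Split the integrand as
\[
\int_x^\infty (z-x)\phi(z)\,dz=\int_x^\infty z\phi(z)\,dz - x\int_x^\infty \phi(z)\,dz .
\]
The standard normal density satisfies $\phi'(z)=-z\phi(z)$, so $z\phi(z)$ has antiderivative $-\phi(z)$. Since $\phi(z)\to 0$ as $z\to\infty$, the first integral equals $\phi(x)$. The second integral is $\Phi^+(x)$ by the definition of the complementary CDF. Together these give $\phi(x)-x\Phi^+(x)$, which is $\xi(x)$ by definition.

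\emph{First equality.} I would use Fubini--Tonelli, which applies because the integrand is nonnegative. Write $\Phi^+(z)=\int_z^\infty \phi(w)\,dw$, so that
\[
\int_x^\infty \Phi^+(z)\,dz=\int_x^\infty\int_z^\infty \phi(w)\,dw\,dz .
\]
The region of integration is $\{x\le z\le w\}$. Swapping the order of integration integrates over $w\in[x,\infty)$ and, for each fixed $w$, over $z\in[x,w]$. The inner integral in $z$ is then $w-x$, which yields $\int_x^\infty (w-x)\phi(w)\,dw$.

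An alternative route is integration by parts with $u=\Phi^+(z)$ and $dv=dz$, using $v=z-x$. This gives boundary term $\bigl[(z-x)\Phi^+(z)\bigr]_x^\infty$ plus $\int_x^\infty (z-x)\phi(z)\,dz$. The boundary term vanishes at $z=x$ trivially. At infinity it vanishes by the Gaussian tail bound $\Phi^+(z)\le \phi(z)/z$ from Lemma~\ref{lem:mills_ratio_bounds}.

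There is no real obstacle here. The only points to check are that the improper integrals converge and that the boundary terms vanish. Both follow from the Gaussian tail decay, specifically $\phi(z)\to 0$ and $z\Phi^+(z)\to 0$ as $z\to\infty$, and nonnegativity justifies the exchange of integrals.
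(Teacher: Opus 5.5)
Your proof is correct. The second equality is handled as in the paper: the paper's substitution $t=z^2/2$ is just your antiderivative $-\phi$ of $z\phi(z)$ in another form. The first equality is where your primary route differs. The paper integrates $\int_x^\infty \Phi^+(z)\,dz$ by parts with $v=z$ and obtains the intermediate form $-x\Phi^+(x)+\int_x^\infty z\phi(z)\,dz$. From that single expression it reads off both remaining forms: once by writing $x\Phi^+(x)=\int_x^\infty x\phi(z)\,dz$, and once by evaluating $\int_x^\infty z\phi(z)\,dz=\phi(x)$. You instead swap the order of integration via Tonelli. What this buys is that no boundary term needs controlling: nonnegativity alone justifies the exchange, and finiteness of $\int_x^\infty\Phi^+(z)\,dz$ then follows from finiteness of the right-hand side. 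The paper's integration by parts silently needs $z\Phi^+(z)\to 0$ as $z\to\infty$, which it does not justify. Your alternative integration by parts with $v=z-x$ is essentially the paper's argument with a shifted antiderivative. The shift kills the lower boundary term and lands directly on $\int_x^\infty (z-x)\phi(z)\,dz$ without re-splitting. You also correctly supply the tail estimate $\Phi^+(z)\le \phi(z)/z$ for the upper boundary term, which the paper leaves implicit. The paper's version is slightly more economical in that one computation yields both equalities at once.
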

\begin{longversion}
\begin{proof}
    Let $\mathcal{I} \equiv \int_{x}^{\infty} \Phi^+\bp{z} dz$ denote the first integral. First, we integrate by parts to obtain
    \[\mathcal{I} = \bs{\Phi^+\bp{z} \cdot z}_x^{\infty} - \int_{x}^{\infty} \od{\Phi^+\bp{z}}{z} z dz = -x \Phi^+\bp{x} + \int_{x}^{\infty} \phi\bp{z} z dz, \tag{*}\]
    where we have used the fact that by definition $\od{\Phi^+\bp{z}}{z} = - \od{\Phi\bp{z}}{z} = -\phi\bp{z}$.

    To obtain the first equality, notice that $\Phi^+\bp{x} = \int_{x}^{\infty} \phi\bp{z} dz$ and hence from $(*)$,
    \[ \mathcal{I} = -x \int_{x}^{\infty} \phi\bp{z} dz + \int_{x}^{\infty} \phi\bp{z} z dz = \int_{x}^{\infty} \bp{z - x}\phi\bp{z} dz. \]
    
    To obtain the second equality, we substitute the definition of $\phi\bp{z}$ into $(*)$ to yield
    \[ \mathcal{I} = -x \Phi^+\bp{x} + \int_{x}^{\infty} \frac{e^{-z^2/2}}{\sqrt{2 \pi}} z dz.\]
    Next, we substitute $t = z^2/2$ with $dt = z dz$ in the integral to obtain
    \[ \mathcal{I} = -x \Phi^+\bp{x} + \bs{\frac{e^{-t}}{\sqrt{2 \pi}}}_{x}^{\infty} = \phi\bp{x} - x \Phi^+\bp{x},\]
    which completes the proof. \qed
    
\end{proof}
\end{longversion}

Similar to Lemma \ref{lem:ccdf_log_concavity}, we now prove some bounds on the ratios of $\xi(.)$.

\begin{lemma}
    \label{lem:xi_log_concavity}
    For any $x > y > 0$, we have:
    \[ \frac{\xi(x + y)}{\xi(x)} \leq e^{-xy - y^2/2} \left(\frac{x}{x + y}\right)^{2}.\]
\end{lemma}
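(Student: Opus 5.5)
The plan is to mirror the proof of Lemma~\ref{lem:ccdf_log_concavity}: control the logarithmic derivative of $\xi$ pointwise and integrate from $x$ to $x+y$. By Lemma~\ref{lem:integral_ccdf_normal}, $\xi(x)=\int_x^\infty \Phi^+(z)\,dz$, so $\xi'(t)=-\Phi^+(t)$ and
\[ \frac{d}{dt}\log \xi(t) = -\frac{\Phi^+(t)}{\xi(t)} = -\frac{\Phi^+(t)}{\phi(t)-t\Phi^+(t)} = -\frac{1}{r(t)-t}, \qquad r(t):=\frac{\phi(t)}{\Phi^+(t)}. \]

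The target inequality follows by integration once we have, for every $t>0$,
\[ \frac{1}{r(t)-t} \ \ge\ t+\frac{2}{t}. \]
Indeed, integrating $\frac{d}{dt}\log\xi(t)\le -t-\frac{2}{t}$ over $[x,x+y]$ gives
\[ \log\frac{\xi(x+y)}{\xi(x)} \le -\Big[\tfrac{t^2}{2}+2\log t\Big]_x^{x+y} = -xy-\frac{y^2}{2}-2\log\frac{x+y}{x}. \]
Exponentiating then yields exactly $\frac{\xi(x+y)}{\xi(x)}\le e^{-xy-y^2/2}\big(\frac{x}{x+y}\big)^2$. This mirrors how the factor $\frac{x}{x+y}$ arose from the $1/t$ term in Part~2 of Lemma~\ref{lem:ccdf_log_concavity}; the square comes from the coefficient $2$.

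Everything therefore reduces to the pointwise Mills-ratio estimate
\[ r(t) - t \ \le\ \frac{t}{t^2+2}. \]
This is the step I expect to be the main obstacle, and it needs checking carefully against what is already available. Lemma~\ref{lem:mills_ratio_bounds} states the \emph{reverse} strict inequality, $r(t)>t+\frac{t}{t^2+2}$. Read at face value, that would give $\frac{d}{dt}\log\xi(t) > -t-\frac{2}{t}$ and hence the opposite direction of the claim.

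The asymptotics point the same way. Since $\xi(t)=\frac{\phi(t)}{t^2}\big(1-\frac{3}{t^2}+O(t^{-4})\big)$, the ratio $\xi(x+y)/\xi(x)$ exceeds $e^{-xy-y^2/2}(x/(x+y))^2$ by a factor of roughly $1+3/x^2-3/(x+y)^2>1$.

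So the first thing I would do is re-derive the continued-fraction bounds of~\cite{shenton1954inequalities}. The goal is to decide whether some form of $r(t)\le t+\frac{t}{t^2+2}$ (possibly with a different constant in place of $2$) actually holds. If it does, the integration above completes the proof. If not, the argument should instead be run with whatever valid upper bound $r(t)-t\le \frac{t}{t^2+c}$ is available, which would give the factor $\big(\frac{x}{x+y}\big)^{2}$ up to a correction depending on $c$.
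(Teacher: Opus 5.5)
\textbf{The step you flag as the main obstacle is false, and so is the lemma as stated.} You should state this as a conclusion rather than leave it open. The inequality $r(t)-t\le\frac{t}{t^2+2}$ fails for every $t>0$, and the lemma fails for every $x>y>0$.

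The bound $r(t)>t+\frac{t}{t^2+2}$ in Lemma~\ref{lem:mills_ratio_bounds} is genuine. It is equivalent to $\Phi^+(t)/\phi(t)<\frac{t^2+2}{t^3+3t}$, a convergent of Laplace's continued fraction. Your own computation therefore gives $\frac{d}{dt}\log\xi(t)>-t-\frac{2}{t}$ strictly on $(0,\infty)$. Integrating over $[x,x+y]$ yields $\frac{\xi(x+y)}{\xi(x)}>e^{-xy-y^2/2}\bigl(\frac{x}{x+y}\bigr)^2$, the strict reverse of the claim. As a concrete check, $\xi(3)/\xi(2)\approx 0.0450$, while $e^{-5/2}(2/3)^2\approx 0.0365$.

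Re-deriving the continued-fraction bounds will not change this. Your fallback with another constant $c$ cannot produce the factor $\bigl(\frac{x}{x+y}\bigr)^2$ in an upper bound either; your expansion $\xi(t)=\frac{\phi(t)}{t^2}(1-3t^{-2}+O(t^{-4}))$ already rules that out. Valid upper bounds do exist, for example $e^{-xy-y^2/2}$ from $r(t)<t+\frac{1}{t}$, but only with a weaker polynomial factor.

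\textbf{Comparison with the paper.} The paper's proof is exactly your computation. It derives $\frac{d}{dt}\log\xi(t)>-t-\frac{2}{t}$ from Lemma~\ref{lem:mills_ratio_bounds} and integrates, with a sign slip in the logarithmic term. It then calls the resulting ``$>$'' inequality ``the final upper bound''.

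So what the paper actually establishes is the lower bound $\frac{\xi(x+y)}{\xi(x)}\ge e^{-xy-y^2/2}\bigl(\frac{x}{x+y}\bigr)^2$. That is also the direction it needs: in Theorem~\ref{thm:low_bound} the lemma is used to bound $\xi(\delta+\Delta_i)$ from below by $\xi(\delta)e^{-\delta\Delta_i-\Delta_i^2/2}\bigl(\frac{\delta}{\delta+\Delta_i}\bigr)^2$.

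The correct write-up is to report that the stated direction is wrong and prove the reversed inequality. Your integration argument does this verbatim once you keep the inequality sign that Lemma~\ref{lem:mills_ratio_bounds} actually supplies.
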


\begin{longversion}
\begin{proof}
    We begin by evaluating the derivative of the natural logarithm of $\xi(t)$. Note that $\xi'(t) = \phi'(t) - \Phi^+(t) - t(-\phi(t))$. Since $\phi'(t) = -t\phi(t)$, the terms cancel, leaving $\xi'(t) = -\Phi^+(t)$.
    
    Thus, by the chain rule, the logarithmic derivative is:
    \[ \frac{d}{dt} \log \xi(t) = \frac{\xi'(t)}{\xi(t)} = -\frac{\Phi^+(t)}{\xi(t)} = \frac{\Phi^+(t)}{t \bp{\Phi^+(t)} - \phi(t)} = \frac{1}{t - \frac{\phi(t)}{\Phi^+(t)}}. \]

    Thus, using Lemma \ref{lem:mills_ratio_bounds}, we have
    \[ \frac{d}{dt} \log \xi(t) > -t - \frac{2}{t}.\]

    Integrating the inequality from $x$ to $x + y$ yields the following:
    \begin{align*} \log \frac{\xi(x + y)}{\xi(x)} & > \int_x^{x+y} \left( -t - \frac{2}{t} \right) dt = \left[ -\frac{t^2}{2} - 2 \log t \right]_x^{x+y}\\
    & = -xy - \frac{y^2}{2} - 2 \log\left( \frac{x}{x + y} \right).
    \end{align*}
    Exponentiating both sides yields the final upper bound. \qed
\end{proof}
\end{longversion}

\subsection{Single Individual}

In the following, we analyse the case where $\mu = 1$. Under this regime, the expected growth rate of the genotype of the single individual (which is always the fittest, trivially) can be computed explicitly.

\begin{lemma}
    \label{lem:growth_rate_single_individual}
    Suppose that $Z \sim \Nc(-\delta, 1)$. Then $\Rc_1 = \xi\bp{\delta}$.
\end{lemma}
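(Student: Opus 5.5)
With $\mu = 1$ the population is a single individual $g(t)$. In each step it produces an offspring $g(t) + Z$, and the survivor is the fitter of the two. Hence
\[ g(t+1) = g(t) + (Z_t)_+ , \]
where the $Z_t \sim \Nc(-\delta, 1)$ are i.i.d. and independent of the past. The plan is to compute the one-step expected increment exactly and then sum.

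\textbf{Step 1 (telescoping).} By linearity of expectation,
\[ \E{g^*(t)} - g^*(0) = \sum_{s=0}^{t-1} \E{(Z_s)_+} = t \cdot \E{(Z)_+}. \]
So the ratio in the definition of $\Rc_1$ is constant in $t$, and the $\liminf$ simply equals $\E{(Z)_+}$. The increments are i.i.d., so no stationarity or mixing argument is needed.

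\textbf{Step 2 (evaluate the increment).} Write $Z = W - \delta$ with $W \sim \Nc(0,1)$. Then
\[ \E{(Z)_+} = \int_{\delta}^{\infty} (w - \delta)\,\phi(w)\,dw . \]
By Lemma~\ref{lem:integral_ccdf_normal} with $x = \delta$, this integral equals $\phi(\delta) - \delta\Phi^+(\delta) = \xi(\delta)$. This gives $\Rc_1 = \xi(\delta)$.

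There is no real obstacle here. The only points to check are the following.
\begin{itemize}
\item Tie-breaking (offspring equal to parent) has probability zero, and in any case does not change the genotype value.
\item $\E{(Z)_+}$ is finite, so linearity of expectation applies.
\item The identity of Lemma~\ref{lem:integral_ccdf_normal} holds for every real $x$, including $\delta \le 0$. Its proof, integration by parts plus the substitution $t = z^2/2$, does not use the sign of $x$.
\end{itemize}
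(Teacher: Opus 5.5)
Your proposal is correct and follows the paper's proof: the paper likewise takes the one-step improvement to be $(\mathcal{N}-\delta)_+$ and evaluates its expectation as $\int_{\delta}^{\infty}(x-\delta)\phi(x)\,dx=\xi(\delta)$ via Lemma~\ref{lem:integral_ccdf_normal}. Your telescoping step, which shows that the ratio in the definition of $\mathcal{R}_1$ equals $\xi(\delta)$ for every $t$, only makes explicit the passage from per-step increment to growth rate that the paper leaves implicit.
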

\begin{proof}
    Suppose that at any given time step, the individual has a genotype of $g$. We compute the expected increase in this value during this step. By definition, the offspring produced has a genotype of $g - \delta + \Nc$ (where we use the short hand $\Nc \equiv \Nc\bp{0, 1}$). The improvement in a single step is, therefore, $\bp{\Nc - \delta}_+$, and thus the expected improvement is
    \[ \E\bs{\bp{\Nc - \delta}\ind_{\Nc \geq \delta}} = \int_{\delta}^{\infty} (x - \delta) \phi(x)dx = \xi\of{\delta},\]
    where in the last step, we have used Lemma \ref{lem:integral_ccdf_normal}. \qed
\end{proof}

\section{Upper Bound}

In the following, we establish an upper bound for the process. Our approach is to define an auxiliary process that is easier to analyse and whose growth rate provably dominates that of our original process. This modified process is nearly identical to the original, with one key exception: we continuously ensure that the genotype of each individual remains within a distance of $\Delta \geq 0$ from the fittest individual.

We now formally describe this modified process, denoted as $\bp{\mu + 1}\text{-}\textsc{ES}^{\text{high}}$. We maintain a population of $\mu \in \N$ individuals, each with a genotype $g_i \in \R$. To keep our initial analysis as general as possible, let $Z$ be a continuous random variable representing the mutation step. 

In each step, an individual is chosen uniformly at random. If its genotype is $g_p$, it produces an offspring with genotype $g_o = g_p + Z$, where $Z\sim\mathcal{N}(-\delta,1)$. We then select the $\mu$ individuals with the highest genotypes, breaking ties arbitrarily. Finally, if the new highest genotype is $g^*$, any individual with a genotype strictly less than $g^* - \Delta$ has its genotype artificially increased to $g^* - \Delta$.

We now establish a coupling between the $\bp{\mu + 1}\text{-ES}$ and $\bp{\mu + 1}\text{-}\textsc{ES}^{\text{high}}$ processes. Notice that at every time step $t \geq 1$, the transitions depend entirely on a uniform random choice $U_t \in \bs{\mu}$ and a mutation realisation $Z_t$. Both processes are driven exclusively by these shared random variables.

\begin{proposition}
    \label{prop:up_bound_coupling}
    Fix an arbitrary realisation of the random variables $\bc{U_t, Z_t}_{t \geq 1}$. At any time step $t \geq 1$, let $x_1\bp{t} \leq x_2\bp{t} \leq \dots \leq x_\mu\bp{t}$ be the genotypes of the $\bp{\mu + 1}\text{-ES}$ and $y_1\bp{t} \leq y_2\bp{t} \leq \dots \leq y_\mu\bp{t}$ be the genotypes of the $\bp{\mu + 1}\text{-}\textsc{ES}^{\text{high}}$. Then, we must have $x_i\bp{t} \leq y_i\bp{t}$ for all $i \in \bs{\mu}$.
\end{proposition}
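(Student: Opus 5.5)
The proof is by induction on $t$. The argument rests on one reformulation of componentwise domination of sorted vectors, which is preserved by each of the three operations in a step.

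First I fix how the shared randomness is used. I interpret $U_t$ as selecting the individual of \emph{rank} $U_t$ in each process, i.e.\ $x_{U_t}(t-1)$ and $y_{U_t}(t-1)$. Since $U_t$ is uniform, each process marginally still picks a uniformly random individual, so this is a valid coupling. Both processes then add the same $Z_t$.

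The key tool is the following characterisation. For two sorted $n$-tuples, $x_i \le y_i$ for all $i$ holds if and only if for every threshold $c\in\R$,
\[ N_x(c) := |\{i : x_i \geq c\}| \;\le\; N_y(c) := |\{i : y_i \ge c\}|. \]
The forward direction is immediate, since $x_i\ge c$ implies $y_i \ge c$ and the indices are matched. For the converse, take $c=y_i$: if $x_i>y_i$, then $x_i,\dots,x_n$ all exceed $y_i$, so $N_x(c)\ge n-i+1$. On the other hand $N_y(c')\le n-i$ for $c'$ slightly above $y_i$, and $N_x(c')\ge n-i+1$ still holds, a contradiction. Ties need a small amount of care here.

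For the base case, both processes start from the same population. If the high process clamps its initial configuration, this only raises $y$, so $x_i(0)\le y_i(0)$.

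For the inductive step, assume $x_i(t-1)\le y_i(t-1)$ for all $i$. I check the three operations in turn.

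\textbf{(a) Reproduction.} The offspring satisfy $a=x_{U_t}(t-1)+Z_t \le b=y_{U_t}(t-1)+Z_t$. The counting functions of the enlarged multisets are $N_x(c)+\ind{a\ge c}$ and $N_y(c)+\ind{b\ge c}$. Both summands are dominated, so the sorted $(\mu+1)$-tuples are dominated componentwise.

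\textbf{(b) Selection.} Removing the minimum from each multiset keeps entries $2,\dots,\mu+1$ of the sorted tuples. Componentwise domination of these entries follows directly from (a), and arbitrary tie-breaking is irrelevant because only the multiset of values matters.

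\textbf{(c) Clamping.} Replacing $y_i$ by $\max(y_i, y_\mu - \Delta)$ is monotone and preserves the sorted order. It only increases the entries of $y$, so $x_i\le y_i$ persists.

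I expect the only real obstacle to be the justification of the coupling in the first paragraph: it must be legitimate to drive both processes by selecting parents by rank rather than by label. The rank-based selection is essential, since choosing parents by label could pair a strong parent in $x$ with a weak one in $y$. Once that is settled, the threshold-counting characterisation makes (a)–(c) routine.
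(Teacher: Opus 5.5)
Your proof is correct, and it organises the inductive step differently from the paper. The coupling is the same. The paper also picks the parent by rank, writing the offspring as $x_{U_t}+Z_t$ and $y_{U_t}+Z_t$ with sorted indices, but it never says why this is legitimate, so your remark that uniform rank selection has the right marginals fills a small gap.

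The difference is in the inductive step. The paper does a four-way case analysis on whether the offspring survives in each process. It settles the case where it survives in both by a contradiction argument that chases indices, using $y_j(t)=y_{j+1}(t-1)$ and $x_j(t)=x_{j+1}(t-1)$. You instead characterise componentwise domination of sorted tuples by the counting condition $N_x(c)\le N_y(c)$ for all $c$. You then split a step into three operations: inserting a dominated element, deleting the minimum, and applying a monotone non-decreasing clamp. Each of these visibly preserves the order.

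Your route buys robustness and reuse. Ties and tie-breaking never enter. By contrast, the paper's equality $x_j(t)=x_{j+1}(t-1)$ actually fails when position $j$ holds the offspring itself, although the inequality $x_j(t)\le x_{j+1}(t-1)$ that the contradiction needs does hold. The same decomposition also adapts directly to the reversed coupling of Proposition~\ref{prop:low_bound_coup}, where rejections and artificial decreases likewise only move entries in the helpful direction. The paper's version, in turn, is self-contained and needs no auxiliary characterisation.

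One small simplification: in the converse direction of your characterisation, take $c=x_i$ directly. Then $N_x(x_i)\ge n-i+1>n-i\ge N_y(x_i)$, since $y_1\le\dots\le y_i<x_i$, and the detour through $c'$ and the remark about ties become unnecessary.
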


\begin{longversion}
\begin{proof}
    We prove the claim by induction on $t$. Initially, the claim is true by construction. Suppose that the claim holds at the end of some time step $t - 1 \geq 0$. 

    Let $i = U_t$ be the individual chosen in this step. The genotype of the offspring in the $\bp{\mu + 1}\text{-ES}$ and the $\bp{\mu + 1}\text{-}\textsc{ES}^{\text{high}}$ is $x_o = x_i + Z_t$ and $y_o = y_i + Z_t$ respectively. We distinguish between four cases:
    \begin{itemize}
        \item If the offspring is not selected in either process, the induction holds trivially.
        \item If $y_o$ is selected but $x_o$ is not, the induction again holds trivially, as the values of $y_1, y_2, \dots, y_\mu$ cannot decrease due to selection.
        \item If $x_o$ is selected but $y_o$ is not, notice that $y_1\bp{t} \geq y_o \geq x_o$. Thus, the induction holds.
        \item Finally, suppose that both offspring are selected. Assume that this creates a position $j$ such that $x_j\bp{t} > y_j\bp{t}$. Notice that we must have $x_j\bp{t} \leq x_o$, as otherwise $x_j\bp{t} = x_j\bp{t - 1} \leq y_j\bp{t - 1} \leq y_j\bp{t}$. Thus, $y_j\bp{t} < x_j\bp{t} \leq x_o \leq y_o$. However, this implies $y_j\bp{t} = y_{j + 1}\bp{t - 1}$. Additionally, we must also have $x_j\bp{t} = x_{j + 1}\bp{t - 1}$, which leads to the contradiction $x_{j + 1}\bp{t - 1} > y_{j + 1}\bp{t - 1}$.
    \end{itemize}
    
    Finally, notice that at the end of each phase, we only (possibly) increase the genotypes of individuals that fall strictly below $y_\mu\bp{t} - \Delta$. This artificial boost strictly aids the majorisation and preserves the induction hypothesis. \qed
\end{proof}
\end{longversion}

Let $\Rc_{\mu}^{high}$ denote the expected growth rate of the $\bp{\mu + 1}\text{-}\textsc{ES}^{\text{high}}$. Then, we immediately get the following corollary.

\begin{corollary}
    \label{cor:up_bound_dominance}
    $\Rc_{\mu}^{high} \geq \Rc_{\mu}$.
\end{corollary}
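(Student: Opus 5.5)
Corollary~\ref{cor:up_bound_dominance} will follow from Proposition~\ref{prop:up_bound_coupling} by taking expectations. I will run both processes on the same realisation of $\{U_t, Z_t\}_{t\ge1}$ and start them from the same initial population, so that $g^*(0)$ is the same for both. With this setup, the identity coupling is a valid coupling: each process, viewed on its own, has the correct marginal law.

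\textbf{Pathwise comparison.} For every fixed realisation and every $t$, Proposition~\ref{prop:up_bound_coupling} applied with $i=\mu$ gives
\[
x_\mu(t)\le y_\mu(t).
\]
Here $x_\mu(t)$ is the fittest genotype $g^*(t)$ of the $(\mu+1)$-ES, and $y_\mu(t)$ is the fittest genotype of the $(\mu+1)$-$\textsc{ES}^{\text{high}}$. Taking expectations over the shared randomness gives $\E{x_\mu(t)}\le \E{y_\mu(t)}$ for every $t$.

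\textbf{Integrability.} To justify taking expectations, I will note that $x_\mu(t)$ increases by at most $(Z_s)_+$ in each step $s$. Hence $x_\mu(t)\le g^*(0)+\sum_{s\le t}(Z_s)_+$, which is integrable. The same bound holds for $y_\mu(t)$, because the artificial boost never raises the maximum. In the other direction, both maxima are non-decreasing in $t$, so they are bounded below by $g^*(0)$. All the expectations involved are therefore finite.

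\textbf{Passing to the growth rate.} Subtracting the common $g^*(0)$ and dividing by $t$ preserves the inequality. Since $\liminf$ is monotone, this gives $\Rc_\mu\le\Rc_\mu^{high}$. I do not expect any step to be a real obstacle. The only points that need care are that the initial populations coincide, so the inequality at $t=0$ is the base case already used in the proposition, and that "fittest" refers to the top order statistic $i=\mu$.
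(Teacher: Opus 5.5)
Your proposal is correct and follows the paper's route. The paper derives the corollary ``immediately'' from the pathwise coupling of Proposition~\ref{prop:up_bound_coupling} at $i=\mu$, followed by taking expectations and using monotonicity of $\liminf$; your integrability check via $x_\mu(t),y_\mu(t)\le g^*(0)+\sum_{s\le t}(Z_s)_+$ is a detail the paper leaves implicit.
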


To facilitate the upper bound analysis, we define an exponential potential function that captures the growth of the process. Such a potential function is frequently used in the analysis of $N$-particle branching random walks ~\cite{shi2015branching}. Consider the $\bp{\mu + 1}\text{-}\textsc{ES}^{\text{high}}$ process. At any time step $t \geq 1$, let $g_1\bp{t} \leq g_2\bp{t} \leq \dots \leq g_\mu\bp{t}$ be the sorted genotypes. For a fixed $\theta > 0$, we define:
\[ \Psi_t = \sum_{i = 1}^\mu e^{\theta g_i}.\]

We first prove a bound on the expected growth of this potential for a general mutation variable $Z$.

\begin{lemma}
    \label{lem:expected_potential_growth}
    Consider the $\bp{\mu + 1}\text{-}\textsc{ES}^{\text{high}}$. At any time step $t \geq 1$, we have $\E\bs{\Psi_t} \leq \Psi_0 \bp{1 + Q(\theta)}^t$, where
    \[ Q(\theta) = \frac{\bp{1 + \mu e^{- \theta \Delta}}}{\mu} \E\bs{e^{\theta Z} \ind{Z \geq - \Delta}}.\]
\end{lemma}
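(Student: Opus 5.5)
We will prove the one-step bound $\E\bs{\Psi_t \mid \Fc_{t-1}} \leq \bp{1 + Q(\theta)}\Psi_{t-1}$, where $\Fc_{t-1}$ is the state after step $t-1$. The claim then follows by taking expectations and inducting on $t$. Fix the sorted genotypes $g_1 \leq \dots \leq g_\mu$ at time $t-1$, with maximum $g^* = g_\mu$. The parent $g_p$ is chosen uniformly and the offspring is $g_o = g_p + Z$. We split the change $\Psi_t - \Psi_{t-1}$ into two parts: the effect of selection (adding the offspring and removing the worst individual), and the effect of the artificial raising of individuals below the new threshold $g^*_{\mathrm{new}} - \Delta$.

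\textbf{Selection.} If the offspring is discarded, selection changes nothing. Otherwise it replaces $g_1$, and the increase is $e^{\theta g_o} - e^{\theta g_1} \le e^{\theta g_o}$. Since every $g_i \ge g^* - \Delta$, an accepted offspring satisfies $g_o \ge g_1 \ge g^*-\Delta \ge g_p - \Delta$, i.e.\ $Z \ge -\Delta$. Hence the selection increase is at most $e^{\theta g_p} e^{\theta Z}\ind{Z \ge -\Delta}$.

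\textbf{Raising.} Raising happens only if $g_o > g^*$, in which case the new maximum is $g_o$. At most $\mu-1$ individuals are raised, each to the value $g_o - \Delta$. So the increase from raising is at most $\mu e^{\theta(g_o - \Delta)} = \mu e^{-\theta\Delta} e^{\theta g_p} e^{\theta Z}$. This can only occur when $g_o > g^* \ge g_p$, so in particular $Z \ge 0 \ge -\Delta$, and the bound can be multiplied by $\ind{Z\ge -\Delta}$.

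\textbf{Combining.} The total increase is at most $(1+\mu e^{-\theta\Delta})\, e^{\theta g_p} e^{\theta Z}\ind{Z\ge-\Delta}$. Taking the expectation over the uniform choice of parent and over $Z$, which are independent, gives
\[ \E\bs{\Psi_t - \Psi_{t-1} \mid \Fc_{t-1}} \le \frac{1+\mu e^{-\theta\Delta}}{\mu}\sum_{p} e^{\theta g_p}\,\E\bs{e^{\theta Z}\ind{Z\ge -\Delta}} = Q(\theta)\Psi_{t-1}. \]
Iterating this with the tower property yields $\E\bs{\Psi_t} \le \Psi_0(1+Q(\theta))^t$.

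\textbf{Main obstacle.} The delicate step is bounding the raising contribution. One needs that before the step all individuals already lie within $\Delta$ of $g^*$, which holds by the invariant of the modified process after each step (and we assume it at $t=0$, or else raise the population once). With that invariant, only an improvement of the maximum triggers raising. The raised individuals' old contributions are dropped, which is harmless since they only lower the bound. The event indicator must be handled carefully so that both contributions carry the common factor $\ind{Z\ge-\Delta}$. Ties and the event $g_o = g_1$ contribute nothing extra.
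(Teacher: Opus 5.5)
Your proof is correct and follows the paper's own argument essentially step for step. Both bound the one-step increase by $(1+\mu e^{-\theta\Delta})e^{\theta g_o}$ from selection plus raising after a new maximum, use the gap-$\Delta$ invariant to restrict to $Z\ge -\Delta$, average over the uniform parent and $Z$, and iterate. Your explicit remark that the invariant must hold at $t=0$ is a small point the paper leaves implicit.
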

\begin{proof}
    Let us analyse the expected one-step increase in the potential, conditioned on its current state. Suppose that at the beginning of the time step $t \geq 1$, the individual $i$ is chosen. The new offspring is produced with genotype $g_o = g_i\bp{t - 1} + Z$.

    If $g_o < g_1\bp{t - 1}$, the offspring is rejected and the potential does not increase. If $g_o \geq g_1\bp{t - 1}$, the offspring is selected and the baseline increase in potential is at most $e^{\theta g_o}$. Additionally, if $g_o > g_\mu\bp{t - 1}$, this offspring becomes the new fittest individual. Consequently, any individual with a genotype strictly less than $g_o - \Delta$ is pulled up to $g_o - \Delta$. This artificial adjustment creates an additional potential increase of at most $\mu e^{\theta(g_o - \Delta)}$.
    
    Thus, the total increase in potential when the offspring is selected is at most:
    \[\bp{1 + \mu e^{- \theta \Delta}} e^{\theta g_o} = \bp{1 + \mu e^{- \theta \Delta}} e^{\theta g_i\bp{t - 1} + \theta Z}.\]
    Taking the expectation over the uniform random choice of $i$ yields the following:
    \begin{align*}
        \E\bs{\Psi_t - \Psi_{t - 1} \mid \Psi_{t - 1}, Z} &\leq \frac{1}{\mu}\sum_{i = 1}^\mu \bp{1 + \mu e^{- \theta \Delta}} e^{\theta g_i\bp{t - 1} + \theta Z}\\
        &= \frac{\Psi_{t - 1}}{\mu}\bp{1 + \mu e^{- \theta \Delta}} e^{\theta Z}.
    \end{align*}
    Next, we take the expectation with respect to the mutation variable $Z$, restricted to the domain where selection occurs (i.e., $g_o \geq g_1\bp{t - 1}$). This corresponds to the range $Z \geq g_1\bp{t - 1} - g_i\bp{t - 1}$. Because we are establishing an upper bound and the process strictly maintains a maximum gap of $\Delta$ between the highest and lowest genotypes, we can safely relax this integration range to $Z \geq -\Delta$. 
    
    Applying this relaxed range to the expectation over $Z$ gives the following:
    \[ \E\bs{\Psi_t \mid \Psi_{t - 1}} \leq \Psi_{t - 1} \bp{ 1 + \frac{\bp{1 + \mu e^{- \theta \Delta}}}{\mu} \E\bs{e^{\theta Z} \ind{Z \geq - \Delta}} }.\]
    Iterating this conditional expectation over all $t$ time steps yields the desired claim.\qed
\end{proof}

Next, we can use this to give a general upper bound for the process.

\begin{lemma}
    \label{lem:general_up_bound}
    For any $\theta > 0$, $\Rc_{\mu}^{high} \leq \frac{Q(\theta)}{\theta}$.
\end{lemma}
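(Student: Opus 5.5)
The plan is to turn the multiplicative bound of Lemma~\ref{lem:expected_potential_growth} into a bound on the best genotype by applying Jensen's inequality to the concave function $\log$. The key observation is that the fittest individual alone contributes $e^{\theta g^*(t)}$ to the potential. Hence $\Psi_t \geq e^{\theta g^*(t)}$, which gives $\theta g^*(t) \leq \log \Psi_t$.

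Taking expectations and applying Jensen, we get
\[ \theta\, \E\bs{g^*(t)} \leq \E\bs{\log \Psi_t} \leq \log \E\bs{\Psi_t} \leq \log \Psi_0 + t \log\bp{1 + Q(\theta)} \leq \log \Psi_0 + t\, Q(\theta). \]
The third inequality is Lemma~\ref{lem:expected_potential_growth}, and the last uses $\log(1+x) \le x$.

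Next we control $\Psi_0$ in terms of $g^*(0)$. Each $g_i(0) \le g^*(0)$, so $\Psi_0 \le \mu e^{\theta g^*(0)}$ and $\log \Psi_0 \le \log \mu + \theta g^*(0)$. Rearranging gives
\[ \frac{\E\bs{g^*(t)} - g^*(0)}{t} \leq \frac{Q(\theta)}{\theta} + \frac{\log \mu}{\theta t}. \]
The additive term $\frac{\log\mu}{\theta t}$ vanishes as $t \to \infty$ because $\mu$ and $\theta$ are fixed. Since the inequality holds for every $t$, it holds in particular for the $\liminf$ (indeed for the $\limsup$), and this yields $\Rc_{\mu}^{high} \leq Q(\theta)/\theta$.

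The main subtlety I anticipate is justifying the Jensen step: it needs $\E\bs{\Psi_t}$ to be finite. This follows from Lemma~\ref{lem:expected_potential_growth} provided $Q(\theta) < \infty$. If $Q(\theta) = \infty$, the statement is trivial. A second point is that the initial population of the high process may differ from the original one. I would handle this by assuming, as in the coupling of Proposition~\ref{prop:up_bound_coupling}, that both start from the same configuration; any fixed start only affects the vanishing $O(1/t)$ term anyway.
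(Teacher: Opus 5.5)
Your proof is correct, and it differs from the paper's only in how it turns the exponential-moment bound into a bound on the mean. Both arguments start from $\E\bs{e^{\theta g^*(t)}} \le \E\bs{\Psi_t} \le \Psi_0 e^{Q(\theta) t}$, which follows from Lemma~\ref{lem:expected_potential_growth}.

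\begin{itemize}
\item You apply Jensen's inequality directly.
\item The paper applies Markov's inequality to $e^{\theta g^*(t)}$, obtaining the tail bound $\P\bs{g^*(t) \ge A + (\log\Psi_0 + Q(\theta)t)/\theta} \le e^{-\theta A}$. It then integrates this tail, which costs an extra additive $1/\theta$.
\end{itemize}

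Your route is shorter and gives the slightly sharper finite-$t$ bound $\E\bs{g^*(t)} \le (\log \Psi_0 + Q(\theta)t)/\theta$. The paper's route additionally shows that the overshoot of $g^*(t)$ above the line $(\log\Psi_0 + Q(\theta)t)/\theta$ has an exponential tail. That concentration statement is more than the lemma needs.

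You also make explicit the step $\log\Psi_0 \le \log\mu + \theta g^*(0)$, which the paper leaves implicit in ``immediately implies''.

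On the integrability point, $\E\bs{\Psi_t}<\infty$ is all you need. In this process $g^*(t)$ is nondecreasing, so $\log\Psi_t \ge \theta g^*(0)$ is bounded below. Hence $\E\bs{g^*(t)}$ and $\E\bs{\log\Psi_t}$ are automatically well defined.
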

\begin{proof}
    From Lemma \ref{lem:expected_potential_growth}, we have
    \[\E\bs{e^{\theta g_\mu\bp{t}}} \leq \Psi_0 \bp{1 + Q(\theta)}^t \leq \exp\of{\log \Psi_0 + Q(\theta)t}.\]
    Now for every $A \geq 0$, applying Markov's inequality gives
    \[\P\bs{g_\mu\bp{t} \geq A + \frac{\log \Psi_0 + Q(\theta) t}{\theta}} \leq \frac{\E\bs{e^{\theta g_\mu\bp{t}}}}{e^{\theta A  + Q(\theta)t + \log \Psi_0}} \leq \exp\bp{-\theta A}.\]
    In other words, the random variable $g_\mu\bp{t} - \frac{Q(\theta)t + \log \Psi_0}{\theta}$ is sub-exponential with parameter $\theta$. Consequently, we have
    \[ \E\bs{g_\mu\bp{t}} \leq \frac{Q(\theta)t + \log \Psi_0}{\theta} + \frac{1}{\theta},\]
    which immediately implies the required lemma.\qed
\end{proof}

We are now equipped to establish the upper bound on the growth rate by explicitly setting our mutation to be Gaussian.

\begin{theorem}
    \label{thm:up_bound}
    Let $Z \sim \Nc(-\delta, 1)$ and $4 \leq \mu \leq \mu_{\max} = \lfloor e^{\delta} \rfloor$. Then,  $ \Rc_{\mu}^{high} \leq \frac{330\xi\bp{\delta}\log \mu}{\mu}$.
\end{theorem}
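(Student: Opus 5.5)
Apply Lemma~\ref{lem:general_up_bound} with a choice of $\theta$ and $\Delta$ tuned to the Gaussian tail, and bound $Q(\theta)/\theta$ by $O(\xi(\delta)\log\mu/\mu)$. The heuristic is that a $Z\sim\Nc(-\delta,1)$ exceeding the level $x$ behaves like $\Nc(0,1)$ beyond $\delta+x$, whose tail decays like $e^{-\delta x}$. The natural choices are therefore $\theta$ slightly below $\delta$ and $\Delta = (\log \mu)/\theta$ up to constants, so that $\mu e^{-\theta\Delta} = O(1)$ and the prefactor $(1+\mu e^{-\theta\Delta})/\mu$ is $O(1/\mu)$. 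Concretely I would take $\theta = \delta - 1/\delta$ (or $\theta=\delta(1-c/\log\mu)$) and $\Delta = 2(\log\mu)/\delta$, and then check that $\mu\le e^\delta$ keeps everything in range. In particular $\Delta\le 2$ and $\delta\ge\log 4$. The remaining task is to show $\E[e^{\theta Z}\ind{Z\ge-\Delta}] = O(\theta\,\xi(\delta)\log\mu)$.

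For that expectation, write $Z=\Nc-\delta$ and complete the square:
\[\E[e^{\theta Z}\ind{Z\ge -\Delta}] = e^{-\theta\delta+\theta^2/2}\,\Phi^+(\delta-\Delta-\theta).\]
With $\theta$ close to $\delta$, the argument $\delta-\theta-\Delta$ is close to $-\Delta$ or small, so $\Phi^+$ is $\Theta(1)$. The whole term is then about $e^{-\delta^2/2}e^{(\delta-\theta)^2/2}$, which with $\theta=\delta-\epsilon$ equals $e^{-\theta\delta+\theta^2/2}=e^{-\delta^2/2+\epsilon^2/2}$. Compare this with $\theta\,\xi(\delta)$. By Lemma~\ref{lem:mills_ratio_bounds}, $\xi(\delta)=\phi(\delta)-\delta\Phi^+(\delta)\ge \phi(\delta)\bigl(1-\tfrac{\delta^2+2}{\delta^2+3}\bigr)\approx \phi(\delta)/\delta^2$. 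So $\theta\xi(\delta)\approx e^{-\delta^2/2}/\delta$, and the expectation exceeds $\theta\xi(\delta)$ by a factor of about $\delta e^{\epsilon^2/2}$. This is not $O(\log\mu)$ when $\delta\gg\log\mu$, so the naive choice of $\theta$ near $\delta$ is too lossy.

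The fix is to push $\theta$ above $\delta$, say $\theta=\delta+\eta$. Then $\Phi^+(\delta-\theta-\Delta)$ is still $\Theta(1)$ as long as $\eta$ is moderate, but the prefactor becomes $e^{-\delta^2/2+\eta^2/2}$. The real loss, though, is that I have discarded the truncation: $\ind{Z\ge -\Delta}$ contributes little, and most of the mass of $e^{\theta Z}$ sits near $Z\approx \theta-\delta$. The better move is to choose $\theta$ so that this tilted mass lives below $-\Delta$, i.e.\ $\theta<\delta-\Delta$. Then $\Phi^+(\delta-\theta-\Delta)$ is a genuine Gaussian tail, and the bound follows from Lemma~\ref{lem:ccdf_log_concavity} and Lemma~\ref{lem:xi_log_concavity}. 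Concretely, take $\theta=\delta-\Delta-s$ with $s\approx 1/\delta$, and set $\Delta=\log\mu/\theta$ so that $\mu e^{-\theta\Delta}=1$. Then
\[e^{-\theta\delta+\theta^2/2}\Phi^+(s)\le e^{-\theta\delta+\theta^2/2}\phi(s)/s,\]
and $\theta\delta-\theta^2/2=\tfrac{\delta^2}{2}-\tfrac{(\Delta+s)^2}{2}$. This gives roughly
\[e^{-\delta^2/2}e^{(\Delta+s)^2/2}\,\delta\ \lesssim\ \delta^{3}\xi(\delta)\cdot O(1),\]
using $\Delta=O(1)$ because $\mu\le e^\delta$.

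Comparing with $\theta\xi(\delta)\log\mu\approx\delta\xi(\delta)\log\mu$ shows a leftover factor of $\delta^2$. This step, balancing $s$ against the polynomial Mills-ratio factors, is where I expect the main obstacle to lie. I would optimise $s$ over the range $1/\delta\le s\le 1$ and track the factors $x/(x+y)$ from Lemmas~\ref{lem:ccdf_log_concavity} and~\ref{lem:xi_log_concavity} exactly, rather than through the crude $\phi/x$ bounds. If the polynomial loss persists, I would instead split $\E[e^{\theta Z}\ind{Z\ge-\Delta}]$ into the range $Z\ge 0$ and the range $-\Delta\le Z<0$. On $Z\ge0$, with $\theta=\delta$, the integral is $e^{-\delta^2/2}\Phi^+(0)$-type and can be related to $\xi(\delta)$ through $\int_0^\infty \Phi^+$ tails. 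On $-\Delta\le Z<0$, I would integrate $e^{\theta z}\phi(z+\delta)$ over an interval of length $\Delta=\log\mu/\delta$, which is where the $\log\mu$ factor should naturally appear. The constant $330$ then comes from collecting the numerical constants of these two pieces.
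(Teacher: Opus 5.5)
Your framework (Lemma~\ref{lem:general_up_bound} plus completing the square) matches the paper's. However, none of the parameter choices you actually pursue gives the bound, so there is a genuine gap.

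Write $s=\delta-\theta-\Delta$. Your identity then reads $e^{-\theta\delta+\theta^2/2}\Phi^+(s)=\sqrt{2\pi}\,\phi(\delta)\,e^{(\Delta+s)^2/2}\,\Phi^+(s)$. The target is $O(\theta\,\xi(\delta)\log\mu)=O(\phi(\delta)\log\mu/\delta)$, because $\xi(\delta)\asymp\phi(\delta)/\delta^2$. For every $s\in[1/\delta,1]$ the left side is $\Theta(\phi(\delta))$, since $\Phi^+(s)\ge\Phi^+(1)$ and $\Delta=O(1)$. That is a factor $\delta/\log\mu$ too large even when computed exactly. The Mills bound $\phi(s)/s$ at $s\approx 1/\delta$ only inflates this to $\delta^2/\log\mu$. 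So neither optimising over $1/\delta\le s\le 1$ nor tracking the polynomial factors of Lemmas~\ref{lem:ccdf_log_concavity} and~\ref{lem:xi_log_concavity} more carefully can succeed.

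What works is to make $s$ large. The bound $\Phi^+(s)\le\phi(s)/s$ gives $\phi(\delta)\,e^{\Delta^2/2+\Delta s}/s$. For $\Delta\asymp\log\mu/\delta\le 1$ this is minimised at $s$ of order $1/\Delta\asymp\delta/\log\mu\ge 1$, where it equals $O(\phi(\delta)\Delta)=O(\phi(\delta)\log\mu/\delta)$, exactly the target.

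Your fallback split with $\theta=\delta$ fails for the same reason. The piece $Z\ge 0$ alone is $e^{-\delta^2/2}/2=\Theta(\phi(\delta))$, again a factor $\delta/\log\mu$ too large. This only recovers the $O(\delta\,\xi(\delta)/\mu)$ bound of Theorem~\ref{thm:up_bound_large}, even though the piece $-\Delta\le Z<0$ does give the right order $\phi(\delta)\log\mu/\delta$.

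So the missing idea is that $\delta-\theta$ must be of order $\delta/\log\mu$, not $1/\delta$. You list such a choice, $\theta=\delta(1-c/\log\mu)$, in your first paragraph, but dismiss it by claiming that $\Phi^+(\delta-\theta-\Delta)$ is $\Theta(1)$. For this $\theta$ that is false: $\delta-\theta-\Delta=c\delta/\log\mu-\Delta$ is large when $\delta\gg\log\mu$.

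The paper takes $\Delta=\log\mu/\delta$ and $\theta=\delta\bigl(1-\tfrac{6}{5\log\mu}\bigr)$. Then $\mu e^{-\theta\Delta}=e^{6/5}$. The truncation point is $s=\tfrac{6\delta}{5\log\mu}-\tfrac{\log\mu}{\delta}\ge\tfrac{\delta}{5\log\mu}$, using $\log\mu\le\delta$, and $\Delta s\le 6/5$. So the Mills bound gives $Q(\theta)=O\bigl(\phi(\delta)\log\mu/(\mu\delta)\bigr)$. Finally, $\phi(\delta)<(\delta^2+3)\xi(\delta)$ and $\theta\ge\delta/7.5$ for $\mu\ge 4$ yield the constant $330$.
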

\begin{longversion}
\begin{proof}
    We set $\Delta \equiv \frac{\log \mu}{\delta}$. Consider any time step $t \geq 1$. Under the assumption that $Z \sim \mathcal{N}\bp{-\delta, 1}$, we evaluate $Q(\theta)$ from Lemma \ref{lem:expected_potential_growth}. It is not difficult to verify that for any $\theta > 0$ and for any real numbers $a < b$,
    \[\int_{a}^{b} e^{\theta z}\phi\bp{z} dz = e^{\theta^2 / 2}\bs{\Phi^+\bp{a - \theta} - \Phi^+\bp{b - \theta}}.\]
    
    Now we set the parameter $\theta$ of our potential function to $\theta = \delta(1 - \tfrac{6}{5\log \mu}) > 0$.  Substituting this into our expression with $\Delta = \tfrac{\log \mu}{\delta}$, we obtain:
    \begin{align*}
      Q(\theta) &\leq \frac{\bp{1 + \mu e^{-\log \mu + 6/5}} e^{-\delta^2 + \frac{6\delta^2}{5 \log \mu}}}{\mu}\E\bs{e^{\theta \Nc} \ind{\Nc \geq \delta - \Delta}} \\
      &\leq \frac{\bp{1 + e^{6/5}} e^{-\delta^2 / 2 + \frac{18\delta^2}{25 \log^2 \mu}}\Phi^+\bp{\frac{6\delta}{5\log \mu}  - \frac{\log \mu}{\delta}}}{\mu},
    \end{align*}
    where we slightly abuse the notation to denote by $\Nc$ a standard normal random variable. Now, by using Lemma \ref{lem:mills_ratio_bounds},
    \begin{align*}
      \Phi^+\bp{\frac{6\delta}{5\log \mu}  - \frac{\log \mu}{\delta}} &\leq \frac{1}{\sqrt{2 \pi} \bp{\frac{6\delta}{5\log \mu}  - \frac{\log \mu}{\delta}}} e^{- \frac{18 \delta^2}{25 \log^2 \mu} + \frac{6}{5} - \frac{\log^2 \mu}{2\delta^2}} \\
      &\leq \frac{5 e^{7/10} \log \mu}{\delta \sqrt{2 \pi}} e^{- \frac{18 \delta^2}{25 \log^2 \mu}},
    \end{align*}
    where in the last step, we have utilised the fact that $\frac{\delta}{\log \mu} \geq \frac{\log \mu}{\delta}$, and hence $\frac{6\delta}{5\log \mu} - \frac{\log \mu}{\delta} \geq \frac{\delta}{5\log \mu}$. Thus, we have:
    \begin{align*}
      Q(\theta) &\leq \frac{(1 + e^{6/5}) e^{-\delta^2 / 2 + \frac{18\delta^2}{25 \log^2 \mu}}}{\mu} \cdot \frac{5e^{7/10} \log \mu}{\delta \sqrt{2 \pi}} e^{- \frac{18 \delta^2}{25 \log^2 \mu}} \\
      &\leq  \frac{(43.5) \log \mu \phi(\delta)}{\mu\delta} \\
      &\leq \frac{(43.5)t \log \mu \phi(\delta)}{\mu\delta} \\
      &\leq \frac{44 \delta t \log \mu \xi(\delta)}{\mu},
    \end{align*}
    where the final inequality follows from the fact that $\phi(\delta) < (\delta^2 + 3) \xi(\delta)$, which in turn can be deduced from Lemma \ref{lem:mills_ratio_bounds}.
    
    Recall our choice of $\theta = \delta(1 - \tfrac{1.2}{\log \mu})$. For $\mu \geq 4$, $\log \mu \geq 1.386$, meaning $\tfrac{1.2}{\log \mu} \leq 0.866$. This implies $\tfrac{\delta}{\theta} \leq \tfrac{1}{1 - 0.866} \approx 7.46 \leq 7.5$. 
    
    Consequently, we conclude 
    \[\Rc_{\mu}^{high} \leq 7.5 \frac{Q(\theta)}{\delta} \leq \frac{330 \log \mu \xi\bp{\delta}}{\mu}.\]\qed
\end{proof}
\end{longversion}

We use the same approach on the original process to obtain a bound for all $\mu$, which in turn hold also for large $\mu$.

\begin{theorem}
    \label{thm:up_bound_large}
    Let $Z \sim \mathcal{N}\bp{-\delta, 1}$ and $\mu \geq 4$. Then $\Rc_{\mu}^{high} \leq \frac{4\xi\bp{\delta} \delta}{\mu}$.
\end{theorem}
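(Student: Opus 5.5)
The plan is to use the potential method of Lemma~\ref{lem:general_up_bound} again, but with the lifting effectively switched off. Theorem~\ref{thm:up_bound_large} does not tie $\Delta$ to $\mu$, so I will take $\Delta$ large (formally $\Delta\to\infty$). Then the factor $(1+\mu e^{-\theta\Delta})/\mu$ in Lemma~\ref{lem:expected_potential_growth} tends to $1/\mu$, and the truncated moment tends to the full moment generating function. Writing $\Nc$ for a standard normal, this gives
\[
Q(\theta)\le \frac{1+\mu e^{-\theta\Delta}}{\mu}\,\E\bs{e^{\theta Z}}
=\frac{1+\mu e^{-\theta\Delta}}{\mu}\,e^{-\theta\delta+\theta^2/2}.
\]
By Lemma~\ref{lem:general_up_bound}, $\Rc_\mu^{high}\le Q(\theta)/\theta$ for every $\theta>0$. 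Here $\Rc_\mu^{high}$ is the growth rate of the lifted process with gap parameter $\Delta$. As $\Delta\to\infty$ the prefactor $1+\mu e^{-\theta\Delta}$ can be made at most $1+\varepsilon$ for any $\varepsilon>0$, so it costs nothing in the limit.

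Next I will choose $\theta$. The natural choice is $\theta=\delta$, the minimiser of the exponent. It gives
\[
\frac{Q(\delta)}{\delta}\le(1+\varepsilon)\frac{e^{-\delta^2/2}}{\mu\delta}=(1+\varepsilon)\frac{\sqrt{2\pi}\,\phi(\delta)}{\mu\delta}.
\]
A better constant comes from minimising $\theta^2/2-\theta\delta-\log\theta$, i.e.\ taking $\theta=\tfrac{\delta+\sqrt{\delta^2+4}}{2}$. I will use this choice if the constant in the final comparison turns out to be tight.

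It then remains to convert $\phi(\delta)$ into $\xi(\delta)$ using Lemma~\ref{lem:mills_ratio_bounds}. The lower bound $\phi(\delta)/\Phi^+(\delta)>\delta+\delta/(\delta^2+2)$ gives $\Phi^+(\delta)<\phi(\delta)(\delta^2+2)/(\delta^3+3\delta)$. Hence
\[
\xi(\delta)=\phi(\delta)-\delta\Phi^+(\delta)>\frac{\phi(\delta)}{\delta^2+3},
\]
which is exactly the inequality $\phi(\delta)<(\delta^2+3)\xi(\delta)$ already used in Theorem~\ref{thm:up_bound}. Substituting, I get $\Rc_\mu^{high}\le \sqrt{2\pi}(1+\varepsilon)(\delta^2+3)\xi(\delta)/(\mu\delta)$. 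It then suffices to check $\sqrt{2\pi}(\delta^2+3)\le 4\delta^2$, after which $\varepsilon\to0$ closes the argument.

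The main obstacle is this last numerical comparison. It holds only for $\delta$ bounded below, roughly $\delta\gtrsim 2.25$; for $\delta\to0$ the left side stays of order $1/\mu$ while $4\delta\xi(\delta)/\mu\to0$. In fact, since $\Rc_\mu^{high}\ge\Rc_\mu\ge\xi(\delta)/\mu$ (mutations of the fittest individual alone already give this), no choice of $\theta$ or $\Delta$ can rescue the bound as $\delta\to0$. I will first try the optimised $\theta$ together with the sharper Mills-ratio bound, to push down the threshold on $\delta$ as far as possible. I will then check whether the intended reading of the statement implicitly includes the standing regime of the paper, in which $\delta$ is large relative to the constants. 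If it does not, the theorem must be stated with an explicit lower bound on $\delta$, because the inequality is false for small $\delta$ regardless of how the proof is arranged.
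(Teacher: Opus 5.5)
Your plan is the paper's proof. It sets $\Delta=\infty$ in Lemma~\ref{lem:expected_potential_growth}, so the lifted process is the original one and $Q(\theta)=\E\bs{e^{\theta Z}}/\mu$ holds exactly; your $\varepsilon$-limit is therefore unnecessary. It then takes $\theta=\delta$, uses $\phi(\delta)<(\delta^2+3)\xi(\delta)$, and asserts $\sqrt{2\pi}(\delta^2+3)\le 4\delta^2$ without any hypothesis on $\delta$.

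Your diagnosis of that last step is correct, and the defect lies in the statement, not in your argument. The inequality needs $\delta^2\ge 3\sqrt{2\pi}/(4-\sqrt{2\pi})\approx 5.04$, and nothing in the paper supplies this: the theorem is meant for $\mu>e^{\delta}$, and even Theorem~\ref{thm:up_bound} only forces $\delta\ge\log 4$. Your observation $\Rc_\mu^{high}\ge\Rc_\mu\ge\xi(\delta)/\mu$ rigorously refutes the claim for $\delta<1/4$. Stopping one step earlier gives the bound $\Rc_\mu^{high}\le\sqrt{2\pi}\,(\delta+3/\delta)\,\xi(\delta)/\mu$, which holds for every $\delta>0$.

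Optimising $\theta$ and using the exact value of $\xi(\delta)$ lowers the threshold only to roughly $\delta\approx 1.6$ (my numerical estimate). I would not expect better. The quantity $\min_{\theta>0}e^{\theta^2/2-\theta\delta}/\theta$ is the speed of the underlying branching random walk. By the standard $N$-particle asymptotics, which I am citing rather than proving, it should be the limit of $\mu\Rc_\mu$ as $\mu\to\infty$.
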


\begin{proof}
    We can set $\Delta = \infty$ in Lemma \ref{lem:expected_potential_growth} to obtain 
    \[Q(\theta) = \frac{\E\bs{e^{\theta Z}}}{\mu} = \frac{e^{-\delta \theta} \E\bs{e^{\theta \Nc}}}{\mu} =\frac{e^{-\delta \theta + \theta^2 / 2}}{\mu}.\]
    Setting $\theta = \delta$, we further obtain
    \[Q(\delta) = \frac{e^{-\delta^2 / 2}}{\mu} = \frac{\sqrt{2\pi} \phi(\delta)}{\mu} \leq \frac{\sqrt{2 \pi} (\delta^2 + 3)\xi(\delta)}{\mu} \leq \frac{4\delta^2 \xi(\delta)}{\mu}.\]
    Finally, applying Lemma \ref{lem:general_up_bound} gives us the required result.\qed
\end{proof}

\section{Lower Bound}

Similarly to the upper bound, our approach is to define an auxiliary process that is easier to analyse and whose growth rate is provably dominated by our original process. We now formally describe the modified process, which we refer to as the $\bp{\mu + 1}\text{-}\textsc{ES}^{\text{low}}$. We remark that this is specifically for the noiseless case, and hence we omit the phenotypes of the individuals. We still have $\mu \in \N$ individuals and each individual $i \in [\mu]$ has an associated \textit{genotype} $g_i \in \R$. The process progresses in two alternating phases: the \textit{catch-up} phase and the \textit{improvement} phase. We also associate with the process parameters $0 = \Delta_1 \leq \Delta_2 \leq \dots \leq \Delta_{\mu}$. 

In the catch-up phase, the $\bp{\mu + 1}\text{-}\textsc{ES}^{\text{low}}$ process works exactly as the original process, except when there is an improvement to the fittest individual. If this happens, we artificially decrease the fitness of this new individual to that of the current fittest individual and simply continue until the end of the phase. The catch-up phase ends when the following condition is satisfied: for every $1 \leq i \leq \mu$, there are at least $i$ individuals with a genotype of at least $g^* - \Delta_i$, where $g^*$ is the genotype of the fittest individual. At the end of the catch-up phase, we artificially decrease the fitness of the individuals so that the $i^{th}$ fittest individual has a fitness of exactly $g^* - \Delta_i$. Without loss of generality, we assume that the initial configuration satisfies this condition. We will see later in the proof that the initial condition does not affect the asymptotic rate. 

In the improvement phase, we reject all offspring that do not improve the fittest individual. Thus, the population remains the same throughout the phase and the phase ends when there is a new fittest individual.

We will formulate most of our analysis for arbitrary values of $\Delta_i$, but set $\Delta_i = \log i / \delta$ in the end except for some truncation, see Theorems~\ref{thm:low_bound} and \ref{thm:low_bound_large} below for details. As explained in Section~\ref{sec:strategy}, this choice is natural because the probability to generate a fitness at least $g^*-\Delta_i$ is proportional to $i$. Hence, in the real process we should expect approximatively $i$ individuals of this fitness in the population, and this is what we aritficially enforce in the modified process.

In the next proposition, we establish a coupling between the two processes, demonstrating that the $\bp{\mu + 1}\text{-}\textsc{ES}^{\text{low}}$ process always maintains lower or equal genotypes compared to the coupled $(\mu + 1)-\textsc{ES}$. For each time step $t \geq 1$, we use a shared random choice of a uniform parent index, say $U_t$, and a shared sample from the mutation random variable, say $Z_t$.

\begin{proposition}
    \label{prop:low_bound_coup}
    Fix an arbitrary realisation of the random variables $\{U_t, Z_t\}_{t \geq 1}$. At any time step $t \geq 1$, let $x_1(t) \leq x_2(t) \leq \dots \leq x_{\mu}(t)$ be the ordered genotypes of the $(\mu + 1)-\textsc{ES}$ and $y_1(t) \leq y_2(t) \leq \dots \leq y_{\mu}(t)$ be the ordered genotypes of the $\bp{\mu + 1}\text{-}\textsc{ES}^{\text{low}}$. Then, we must have $x_i(t) \geq y_i(t)$ for all $1 \leq i \leq \mu$.
\end{proposition}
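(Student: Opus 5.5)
We argue by induction on $t$, mirroring the proof of Proposition~\ref{prop:up_bound_coupling} with the roles of the two processes exchanged. The key fact is that the $(\mu+1)$-ES update is monotone: if we adjoin a value to a sorted vector and delete the minimum, the result is coordinatewise monotone both in the old vector and in the new value. Every modification that the $\bp{\mu + 1}\text{-}\textsc{ES}^{\text{low}}$ makes on top of this update can only lower genotypes. These modifications are rejecting offspring in the improvement phase, clipping an improving offspring down to $g^*$ during catch-up, and resetting the $i^{th}$ fittest individual to $g^*-\Delta_i$ at the end of catch-up. Lowering can only help the inequality $x_i\ge y_i$.

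The base case $t=0$ holds by assuming both processes start from a common configuration, or one dominating the other. For the inductive step, suppose $x_i(t-1)\ge y_i(t-1)$ for all $i$. Let $j=U_t$ be the chosen index. There is one subtlety: the coupling should pair the $j$-th \emph{sorted} individual in each process, so that the parents satisfy $x_j\ge y_j$. We read $U_t$ as a rank, which is legitimate because selecting a uniform individual is the same as selecting a uniform rank. Then the offspring satisfy $x_o=x_j+Z_t\ge y_j+Z_t=y_o$. Let $y_o'\le y_o$ be the value actually inserted in the low process. We take $y_o'=y_o$ in an ordinary catch-up step, $y_o'=\min(y_o,y_\mu)$ when an improvement is clipped during catch-up, and $y_o'=-\infty$ for an offspring rejected in the improvement phase. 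In every case $x_o\ge y_o'$.

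We then apply a small lemma. If $a\ge b$ coordinatewise (both sorted), $u\ge v$, and $a',b'$ denote the top $\mu$ entries of $a\cup\{u\}$ and $b\cup\{v\}$, then $a'\ge b'$ coordinatewise. One proves this directly: the $i$-th largest element of a multiset is monotone in each element. Alternatively one can use the four-case analysis of Proposition~\ref{prop:up_bound_coupling}, with the inequalities reversed. This yields $x_i(t)\ge \tilde y_i(t)$, where $\tilde y$ is the low population before any end-of-phase adjustment. In an improvement phase, an accepted improving offspring is simply inserted, which is covered by the lemma with $y_o'=y_o$.

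Finally, at the end of a catch-up phase the low process sets the $i^{th}$ fittest individual to $g^*-\Delta_i$. The phase-end condition guarantees that at least $i$ individuals lie at or above $g^*-\Delta_i$, so the new sorted value at rank $\mu-i+1$ is at most the old one. Since $\Delta_i$ is nondecreasing, the new vector is still sorted in the same order, and every coordinate has weakly decreased. Hence $x\ge y$ is preserved. The main obstacle I expect is making the rank-based coupling of $U_t$ precise, since individuals are reindexed after each step, together with checking that the reset keeps the sorted order consistent with the ranks. The remaining steps are routine monotonicity arguments.
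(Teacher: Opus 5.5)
Your proof is correct and follows essentially the same route as the paper. Both argue by induction on $t$, use monotonicity of the insert-and-delete-minimum update under a shared parent choice, and note that every extra modification of the low process only lowers genotypes. The paper checks the update step by a four-case analysis and uses the rank reading of $U_t$ tacitly when it invokes $x_{U_t}\ge y_{U_t}$; your order-statistic monotonicity lemma, explicit rank coupling and careful check of the end-of-catch-up reset are cleaner packagings of the same steps.
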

\begin{longversion}
\begin{proof}
    We prove the claim by induction on $t$. Initially, the claim is trivially true. Now, suppose that the claim is true at the end of some time step $t \geq 0$. The new offspring has a genotype of $x = x_{U_t} + Z_t$ and $y = y_{U_t} + Z_t$ in the two processes, respectively. We consider three main cases:
    \begin{itemize}
        \item If the offspring survives in the $\bp{\mu + 1}\text{-}\textsc{ES}^{\text{low}}$, but is discarded in the $(\mu + 1)-\textsc{ES}$, then it must be the case that $y_1(t) \leq y_{U_t} + Z_t$ and $x_1(t) \geq x_{U_t} + Z_t$. By the induction hypothesis, we know that $x_{U_t} \geq y_{U_t}$. Thus, $y_{U_t} + Z_t \leq x_1(t)$, and therefore the ordered claim holds for $t + 1$.
        \item If the offspring survives in the $(\mu + 1)-\textsc{ES}$, but is discarded in the $\bp{\mu + 1}\text{-}\textsc{ES}^{\text{low}}$, then the population of the $(\mu + 1)-\textsc{ES}$ improves, while the population of the $\bp{\mu + 1}\text{-}\textsc{ES}^{\text{low}}$ remains unchanged. Hence the inductive step goes through.
        \item If the offspring survives in both processes, let us assume that $x_{U_t} + Z_t = x_i(t + 1)$ and $y_{U_t} + Z_t = y_j(t + 1)$ for some $1 \leq i, j \leq \mu$. Recall that we must have $y_j(t + 1) \leq x_i(t + 1)$. Notice that for all $k > j$, we have $y_k(t + 1) = y_k(t) \leq x_k(t) \leq x_k(t + 1)$. Also, for all $k < \min(i, j)$, we have $y_k(t + 1) = y_{k + 1}(t) \leq x_{k + 1}(t) = x_k(t + 1)$. We consider two subcases:
        \begin{itemize}
            \item If $j < i$, we have $y_j(t + 1) \leq y_{j + 1}(t) \leq x_{j + 1}(t) \leq x_j(t + 1)$.
            \item If $j \geq i$, we have for all $i \leq k \leq j$ that $y_k(t + 1) \leq y_j(t + 1) \leq x_i(t + 1) \leq x_k(t + 1)$. 
        \end{itemize}
        \item If the offspring is discarded in both processes, the claim goes through trivially.
    \end{itemize}
    
    Finally, notice that we possibly decrease the genotype of the individuals at any step in the catch-up phase, and this only helps us. Thus, the induction hypothesis must be true for the time $t + 1$, which proves our claim.\qed
\end{proof}
\end{longversion}

Let $\Rc_{\mu}^{low}$ denote the expected growth rate of the $\bp{\mu + 1}\text{-}\textsc{ES}^{\text{low}}$. Then, we immediately get the following corollary.

\begin{corollary}
    \label{cor:low_bound}
    $\Rc_{\mu}^{low} \leq \Rc_{\mu}$.
\end{corollary}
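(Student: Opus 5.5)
The corollary follows directly from Proposition~\ref{prop:low_bound_coup}, in the same way that Corollary~\ref{cor:up_bound_dominance} follows from Proposition~\ref{prop:up_bound_coupling}. I would run both processes on a common probability space, driven by the same sequence $\{U_t, Z_t\}_{t \ge 1}$, and start them from the same initial configuration, or at least from configurations with $y_i(0) \le x_i(0)$ for all $i$. By the proposition, the ordering holds pathwise for every realisation and every $t$. Taking $i = \mu$ gives
\[ y_\mu(t) \le x_\mu(t), \]
that is, the fittest genotype of the $\bp{\mu + 1}\text{-}\textsc{ES}^{\text{low}}$ never exceeds that of the $(\mu+1)$-ES.

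Taking expectations over the shared randomness preserves this inequality. Since the coupling does not change either marginal law, we obtain $\E[y_\mu(t)] \le \E[x_\mu(t)]$ for every $t$. I would then subtract the initial value $g^*(0)$, divide by $t$, and take $\liminf_{t\to\infty}$ on both sides; $\liminf$ is monotone, so $\Rc_\mu^{low} \le \Rc_\mu$. This step needs both expectations to be well defined. That is true here, because the fittest genotype after $t$ steps is bounded above by $g^*(0) + \sum_{s \le t} (Z_s)_+$, which is integrable.

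The one point needing care is the initial configuration. The $\bp{\mu + 1}\text{-}\textsc{ES}^{\text{low}}$ is assumed to start in a configuration whose $i$-th fittest individual sits exactly at $g^* - \Delta_i$, which need not coincide with the start of the original process. I would handle this in two ways. First, we may start the low process from the configuration obtained by lowering the original's initial population, which is allowed because lowering only helps the domination. Second, the best genotypes of the two processes then agree at time $0$, so both rates are measured from the same $g^*(0)$. If the initial maxima differed instead, the discrepancy would be an additive constant, which disappears after dividing by $t$ in the limit. Beyond this, I would also check that the proposition's case analysis covers the improvement phase, where offspring that do not beat the fittest are rejected; this is exactly the proposition's case in which an offspring is discarded only in the low process. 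I expect no step here to be a real obstacle; all the work sits in the coupling proposition.
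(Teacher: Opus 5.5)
Your proposal is correct and follows the paper's approach. The paper derives the corollary immediately from the pathwise coupling in Proposition~\ref{prop:low_bound_coup}: take $i=\mu$, pass to expectations, and take the $\liminf$. Your remarks on integrability and on the initial configuration (a differing starting maximum only adds a constant, which vanishes after dividing by $t$) just make explicit what the paper leaves implicit.
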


We are now ready to investigate the expected growth rate of the $\bp{\mu + 1}\text{-}\textsc{ES}^{\text{low}}$.

\begin{lemma}
    \label{lem:low_bound_general}
    Let $\mu \geq 2$, and let $\mathcal{D}$ be a probability distribution on $\R$ such that for $I \sim \mathcal{D}$ the tail probability at a point $z$ is given by
    \[ \Pr\bs{I \ge z} = \frac{1}{\mu} \sum_{i = 1}^{\mu} \Pr\bs{Z > \Delta_i + z}.\]
    Then, $\Rc_{\mu}^{low} \geq \frac{\E\bs{I \cdot \ind{I \geq 0}}}{\E\bs{W(I) \cdot \ind{I \geq 0}} + 1}$, where 
    \[W(I) \equiv 8\mu \cdot \max_{2 \leq i \leq \mu} \bp{\frac{i}{\Pr\bs{Z > -\Delta_i} + \sum_{j = 1}^{\mu - 1} \Pr\bs{Z > I + \Delta_j - \Delta_i}}}.\]
\end{lemma}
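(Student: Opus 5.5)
Throughout, view the $\bp{\mu + 1}\text{-}\textsc{ES}^{\text{low}}$ as a renewal process whose cycles each consist of one improvement phase followed by one catch-up phase. By construction, at the start of every improvement phase the population is in the canonical configuration: the $i^{th}$ fittest individual sits at exactly $g^* - \Delta_i$. Hence cycles are i.i.d. up to translation. Let $G$ be the gain of $g^*$ in a cycle and $L$ its length. By the renewal reward theorem, $\liminf_t \E[g^*(t)-g^*(0)]/t = \E[G]/\E[L]$. It therefore suffices to show $\E[G] = \E[I\ind{I\ge 0}]\cdot p$ and $\E[L] \le (1 + \E[W(I)\ind{I\ge 0}])/p$, where $p$ is the per-step probability of ending the improvement phase. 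The factors $p$ cancel and give the claimed bound.

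\textbf{Improvement phase.} In each step, a parent is chosen uniformly from the canonical configuration, say the $i^{th}$ fittest at $g^*-\Delta_i$. The offspring improves on $g^*$ by $Z - \Delta_i$, and it is accepted iff $Z-\Delta_i>0$. So the per-step increment $Z_{U}-\Delta_U$ has exactly the law $\mathcal{D}$ of $I$ (up to the $>$ versus $\ge$ convention, which is harmless, or treat $I$ via tails). The phase length is geometric with success probability $p=\Pr[I> 0]$, so its mean is $1/p$. The gain $G$ is distributed as $I$ conditioned on $I>0$, giving $\E[G]=\E[I\ind{I\ge0}]/p$. Thus $\E[G]$ matches the numerator after multiplying through.

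\textbf{Catch-up phase.} Condition on the realised improvement $I$. The new fittest individual is at $g^*_{\text{new}}=g^*+I$. Relative to it, the old individuals now sit at $g^*_{\text{new}}-I-\Delta_j$ for $j\le \mu-1$, since the weakest old one was discarded; the new top individual is at offset $0$. We need, for each $2\le i\le\mu$, at least $i$ individuals at height $\ge g^*_{\text{new}}-\Delta_i$. During catch-up, individuals only get replaced by better ones (improvements are capped at $g^*_{\text{new}}$), so the population's ordered genotypes are monotone non-decreasing. Hence every parent choice at any time yields a sample that stochastically dominates the sample from the frozen initial post-improvement configuration. In one step, the probability that the offspring lands above $g^*_{\text{new}}-\Delta_i$ is therefore at least
\[ F_i := \frac1\mu\Bigl(\Pr[Z>-\Delta_i]+\sum_{j=1}^{\mu-1}\Pr[Z>I+\Delta_j-\Delta_i]\Bigr). \]
Each such offspring is kept, since anything at least $g^*_{\text{new}}-\Delta_i$ beats the current weakest unless the condition for $i$ already holds. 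Applying Lemma~\ref{lem:sample_time} with thresholds $x_i$ and lower bound $F$ on the "success" probabilities gives $\E[\text{catch-up length}\mid I]\le 8\max_i i/F_i=W(I)$. The term $i=1$ is automatic and accounts for the range $2\le i$.

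\textbf{Main obstacle.} The delicate step is justifying the application of Lemma~\ref{lem:sample_time}, which is stated for i.i.d. samples, when our samples are neither independent nor identically distributed. I would handle it by a coupling. Realise each step's offspring as $g^*_{\text{new}}+\text{(parent offset)}+Z_t$. Compare it with a fictitious i.i.d. sequence $S_t$ drawn from the frozen initial configuration, coupled with the same $U_t$ (ranked) and $Z_t$. Monotonicity gives offspring $\ge S_t$ pointwise. One must also check that counting "samples $\ge x_i$" lower-bounds the number of individuals at that height in the population. A new sample at least $g^*_{\text{new}}-\Delta_i$ can only evict an individual below it, or the condition for $i$ is already met. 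Also, a sample may duplicate an existing individual's count, but the initial individuals already above the threshold only help. Combining the pieces, $\E[L] = 1/p+\E[W(I)\mid I>0]=(1+\E[W(I)\ind{I\ge0}])/p$, and dividing yields the lemma.
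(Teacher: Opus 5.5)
Your proposal is correct and follows essentially the same route as the paper: i.i.d.\ improvement-plus-catch-up cycles combined by a renewal (law of large numbers) argument, a geometric improvement phase with per-step success probability $\Pr\bs{I \ge 0}$, and Lemma~\ref{lem:sample_time} to bound the expected catch-up length by $W(I)$. Your monotone coupling with a frozen i.i.d.\ sample sequence makes explicit a step the paper glosses over, where it treats the non-i.i.d.\ catch-up offspring as independent samples; the only slip is in your overview, where the expected gain per cycle should be $\E\bs{I \cdot \ind{I \geq 0}}/p$ rather than $\E\bs{I \cdot \ind{I \geq 0}}\cdot p$ (as you correctly derive later), since otherwise the factors of $p$ would not cancel.
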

\begin{proof}
    We consider a time frame that starts at the beginning of an improvement phase and ends at the end of the catch-up phase that immediately follows. Without loss of generality, we assume that the current fittest individual has a fitness of $0$. First, notice that any given step in the improvement phase, the probability that a new fittest individual is sampled is exactly $\Pr[I \geq 0]$, where $I \sim \mathcal{D}$. This is because we know the exact positions of all the individuals throughout the phase. Thus, the expected waiting time for this event is exactly $\frac{1}{\Pr[I \geq 0]}$.

    After the improvement phase ends, the new fittest individual has a fitness of $I$, where $I$ is sampled from $\mathcal{D}$ conditioned on being non-negative. Additionally, the $(i + 1)^{th}$ fittest individual now has a fitness of exactly $-\Delta_i$ for all $1 \leq i \leq \mu - 1$. Now, the catch-up phase immediately following this continues until the $(i + 1)^{th}$ fittest individual has a fitness of at least $I - \Delta_i$ for all $2 \leq i \leq \mu - 1$. Notice that at any given time step in this phase, the probability that an offspring with a genotype of at least $z$ is produced is lower bounded by 
    \[\frac{1}{\mu} \bs{\Pr\bs{Z > z - I} + \sum_{j = 1}^{\mu - 1} \Pr\bs{Z > \Delta_j + z}}.\]
    Thus, applying lemma \ref{lem:sample_time}, we conclude that the conditional expected waiting time for this phase to end is upper bounded by $W(I)$. Thus, the expected average progress made in this entire time period is lower bounded by
    \[\frac{\E\bs{I \mid I \geq 0}}{\E\bs{W(I) \mid I \geq 0} + \frac{1}{\Pr\bs{I \geq 0}}} = 
    \frac{\E\bs{I \cdot \ind{I \geq 0}}}{\E\bs{W(I) \cdot \ind{I \geq 0}} + 1},
    \]
    where in the last step, we have multiplied the numerator and the denominator with $\Pr[I\ge 0]$.
    Applying the Law of Large Numbers now concludes the Lemma.\qed
\end{proof}

We now give an explicit lower bound when the mutations follow an offset Gaussian.

\begin{theorem}
    \label{thm:low_bound}
    Let $3 \leq \mu \leq e^{\delta}$ and let $Z \sim \mathcal{N}(-\delta, 1)$. Then $\Rc_{\mu}^{low} \geq \frac{\xi(\delta)\log(\mu)}{770\mu(1 + \log \log \mu)}$.
\end{theorem}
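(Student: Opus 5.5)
We plan to apply Lemma~\ref{lem:low_bound_general} with $\Delta_i = \log i/\delta$ (so $\Delta_1 = 0$ and $\Delta_\mu = \log\mu/\delta \le 1$, using $\mu\le e^\delta$). The proof then reduces to two estimates: a lower bound on the numerator $\E[I\ind{I\ge 0}]$ and an upper bound on the denominator $\E[W(I)\ind{I\ge0}]+1$.

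\emph{Numerator.} By Lemma~\ref{lem:integral_ccdf_normal},
\[ \E[I\ind{I\ge 0}] = \int_0^\infty \Pr[I\ge z]\,dz = \frac1\mu\sum_{i=1}^\mu \xi(\delta+\Delta_i). \]
Since $\Delta_i \le 1 \le \delta$, Lemma~\ref{lem:xi_log_concavity} with $y=\Delta_i$ gives
\[ \xi(\delta+\Delta_i)\ \ge\ \xi(\delta)\,e^{-\log i - \Delta_i^2/2}\Bigl(\tfrac{\delta}{\delta+\Delta_i}\Bigr)^2 \ \ge\ c\,\xi(\delta)/i \]
for an absolute constant $c$ (e.g.\ $c = e^{-1/2}/4$). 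Summing over $i$ yields a numerator of at least $c\,\xi(\delta)H_\mu/\mu \ge c\,\xi(\delta)\log\mu/\mu$. The direction in which the lemma is applied should be double-checked, but this is exactly the regime in which the bound gives the needed factor $1/i$.

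\emph{Denominator.} We bound $W(I)$ for $I\ge0$ by keeping only a subset of the terms in the sum. For a given $i$, keep the terms $j$ with $\Delta_j \le \Delta_i - I$, so that each kept term is at least $\Pr[Z>0]=\Phi^+(\delta)$. This works well only when $I$ is small compared to $\Delta_i$. When $I$ is larger, use instead the term $\Pr[Z > I+\Delta_j-\Delta_i]$ with $j=1$, together with the parent at $-\Delta_i$ plus the fittest one at $I$.

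A cleaner route is to split on the size of $I$:
\begin{itemize}
\item[(a)] If $I \le \Delta_i/2$, the number of $j$ with $\Delta_j \le \Delta_i/2$ is about $\sqrt i$. This gives a denominator $\gtrsim \sqrt i\,\Phi^+(\delta)$, but the target is only $i\,\Phi^+(\delta)/\mathrm{polylog}$, so this is too weak.
\item[(b)] It is better to count all $j$, using Lemma~\ref{lem:ccdf_log_concavity}: for $x=\delta$ and shift $y=\Delta_i-\Delta_j-I$, one gets $\Pr[Z > I+\Delta_j-\Delta_i] \approx \Phi^+(\delta)\,(i/j)\,e^{-\delta I}$. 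Summing over $j$ gives $\approx \Phi^+(\delta)\, i\, e^{-\delta I}\log\mu$.
\end{itemize}
Route (b) yields $W(I) \lesssim \mu\, e^{\delta I}/(\Phi^+(\delta)\log\mu)$ when $\delta I$ is moderate. The factor $\log\mu$ in the denominator is essential here. The correction factors $x/(x+y)$ and $e^{-y^2/2}$ are $O(1)$ because all shifts are at most $1$.

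This estimate is only good while $e^{\delta I}$ is bounded, so we truncate. Both $W$ and the rate are monotone in this sense: when $I$ is large, replace $I$ by $\min(I, \tau)$ with $\tau = \log\log\mu/\delta$ by lowering the new best individual, as the catch-up phase is allowed to do (this is presumably the truncation the paper alludes to). Then $e^{\delta I} \le \log\mu$, so $W \lesssim \mu\log\log\mu\cdot(\ldots)$.

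The main obstacle is combining this with $\Pr[I\ge 0] \approx \Phi^+(\delta)\log\mu/\mu$ while tracking the correct $\log\log\mu$ loss. We have $\E[W\ind{I\ge0}] \lesssim \Pr[I\ge0]\cdot \mu(1+\log\log\mu)/(\Phi^+(\delta)\log\mu) = O(1+\log\log\mu)$. Also $\E[\min(I,\tau)\ind{I\ge0}]$ still captures a constant fraction of the numerator, because $I$ conditioned on being nonnegative is roughly exponential with rate $\delta$ and hence rarely exceeds $\tau$.

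Putting the two bounds into Lemma~\ref{lem:low_bound_general} gives
\[ \Rc_\mu^{low} \ \gtrsim\ \frac{\xi(\delta)\log\mu}{\mu(1+\log\log\mu)}. \]
Finally, we collect the explicit constants (Mills ratio bounds from Lemma~\ref{lem:mills_ratio_bounds}, the factor $8$ from Lemma~\ref{lem:sample_time}, and the truncation loss) to reach the stated constant $770$.
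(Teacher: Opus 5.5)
Your numerator estimate and your route (b) for small $I$ are correct and essentially match the paper. The treatment of large $I$, however, has a genuine gap.

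The $(\mu+1)$-ES$^{\mathrm{low}}$ never lowers the new best individual. During the catch-up phase, only offspring that would exceed the current $g^*$ (now $I$) are pushed down to $I$, and the end-of-phase adjustment is relative to that same $g^*$. (The ``truncation'' the paper mentions is $\Delta_i=\infty$ for $i>e^{\delta}$ in Theorem~\ref{thm:low_bound_large}.) Replacing $I$ by $\min(I,\tau)$ therefore defines a different process. By Proposition~\ref{prop:low_bound_coup} its rate would still lower-bound $\Rc_{\mu}$, but the statement is about $\Rc_{\mu}^{low}$, and there is no evident coupling between the two processes. The truncated process leaves the catch-up phase earlier and may find an improvement above $I$ while the other still caps improvements at $I$. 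Comparing their renewal rates directly would require bounding the catch-up time for large $I$, which is exactly what the truncation was meant to avoid; your claim that ``the rate is monotone'' is this unproven step. Some cap is genuinely needed: the bound $W(I)\le C\mu/(\Phi^+(\delta+I)\log\mu)$ alone has infinite expectation, because the hazard rate $\phi/\Phi^+$ is not integrable on $[0,\infty)$.

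The missing idea is the term $\Pr[Z>-\Delta_i]$ in $W(I)$, i.e.\ the new best individual itself refilling the window below it. You mention ``the fittest one at $I$'' in passing but then drop it. By Lemma~\ref{lem:ccdf_log_concavity}, $\Phi^+(\delta-\Delta_i)\ge i\,e^{-1/2}\Phi^+(\delta)$, so $W(I)\le 8\sqrt{e}\,\mu/\Phi^+(\delta)$ uniformly in $I$. The paper applies the same shift to every summand to cancel the factor $i$. Combined with your summation over $j$ for $I\le I_0$, where $\Phi^+(\delta+I_0)\log\mu=\Phi^+(\delta)$ (essentially your $\tau$), this gives
$W(I)\le C\mu\min\{1/\Phi^+(\delta),\,1/(\Phi^+(\delta+I)\log\mu)\}$.
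That is precisely the cap you wanted, with no change to the process.

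To finish, bound the density of $I$ on $[0,\infty)$ by $\frac1\mu\sum_i \phi(\delta+I)/i$ and split at $I_0$. The lower part is the hazard-rate integral $\int_0^{I_0}\phi(\delta+I)/\Phi^+(\delta+I)\,dI=\log\log\mu$, and the upper part contributes $O(1)$. Hence $\E\bs{W(I) \cdot \ind{I \geq 0}}=O(1+\log\log\mu)$, while the numerator stays the untruncated $\frac1\mu\sum_i\xi(\delta+\Delta_i)$.

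On the direction you flagged: Lemma~\ref{lem:xi_log_concavity} is correctly used as a lower bound, since its proof derives $\ge$ and the printed $\le$ is a typo. Your sketch also does not actually derive the constant $770$.
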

\begin{proof}
    \begin{shortversion}
        We set $\Delta_i = \frac{\log i}{\delta} \leq 1$ for all $1 \leq i \leq \mu$ and apply Lemma \ref{lem:low_bound_general}. We then use Lemma \ref{lem:ccdf_log_concavity} and Lemma \ref{lem:xi_log_concavity} to appropriately bound the resulting expressions to obtain the result.\qed
    \end{shortversion}

    \begin{longversion}
    We set $\Delta_i = \frac{\log i}{\delta} \leq 1$ for all $1 \leq i \leq \mu$ and apply Lemma \ref{lem:low_bound_general}. We have
    \[W(I) \equiv 8\mu \cdot \max_{2 \leq i \leq \mu} \bp{\frac{i}{\Phi^+(\delta - \Delta_i) + \sum_{j = 1}^{\mu - 1} \Phi^+(\delta + I + \Delta_j - \Delta_i)}}.\]

    Notice that by our assumption on $\mu$, we have $\Delta_i \leq \delta$ for all $i$. Thus, we can use Lemma \ref{lem:ccdf_log_concavity} to deduce that:
    \begin{align*}
      \Phi^+(\delta + I + \Delta_j - \Delta_i) &> \Phi^+(\delta + I + \Delta_j) \cdot e^{(\delta + I + \Delta_j)\Delta_i - \Delta_i^2 / 2}\\
      &\geq \Phi^+(\delta + I + \Delta_j) \cdot e^{\delta \Delta_i - 1 / 2}\\
      &= \Phi^+(\delta + I + \Delta_j) \cdot i \cdot e^{-1/2}.
    \end{align*}
    Similarly, we can deduce that $\Phi^+(\delta - \Delta_i) \geq \Phi^+(\delta) \cdot i \cdot e^{-1/2}$. Thus, the waiting time is at most
    \[ W(I) \leq \frac{8 \sqrt{e} \mu}{\Phi^+(\delta) + \sum_{j = 1}^{\mu - 1} \Phi^+(\delta + I + \Delta_j)}.\]

    Next, we define $I_0 > 0$ to be the value such that $\Phi^+(\delta + I_0) \log \mu = \Phi^+(\delta)$. It is easy to verify $I_0 < \frac{2 \log \log \mu}{\delta}$. Consequently, using Lemma \ref{lem:ccdf_log_concavity} and the fact that $\delta \geq \log 3$, we have for all $0 \leq I \leq I_0$:
    \begin{align*}
         \Phi^+(\delta + I + \Delta_j) &> \Phi^+(\delta + I) \cdot e^{-(\delta + I)\Delta_j - \Delta_j^2 / 2} \cdot \bp{\frac{\delta + I}{\delta + I + \Delta_j}}\\
         &> \Phi^+(\delta + I) \cdot e^{-\delta \Delta_j - 1} \cdot \frac{1}{1.92}\\
         &> \Phi^+(\delta + I) \cdot \frac{1}{(1.92)j e}.
    \end{align*}
    Summing this over $j$ yields:
    \begin{align*}
         \sum_{j = 1}^{\mu - 1} \Phi^+(\delta + I + \Delta_j) &\geq \sum_{j = 1}^{\mu - 1}\Phi^+(\delta + I) \cdot \frac{1}{(1.92)j \sqrt{e^3}}\\
         &\geq \Phi^+(\delta + I) \cdot \frac{\log \mu}{(1.92)e}
    \end{align*}
    Consequently, we have
    \begin{align*}
        W(I) &\leq (15.4)\sqrt{e^3} \mu \cdot \min \bp{\frac{1}{\Phi^+(\delta)}, \frac{1}{\Phi^+(\delta + I)\log \mu}}.
    \end{align*}
    
    Recall that $\Phi^+(\delta + I_0) \log \mu = \Phi^+(\delta)$. Thus, we have
    \begin{align*}
        & \E \bs{W(I) \cdot \ind{I \geq 0}} \\
        & \qquad \leq \frac{(15.4)\sqrt{e^3} \mu}{\mu} \sum_{i = 1}^{\mu} \int_{I = 0}^{\infty} \min \bp{\frac{1}{\Phi^+(\delta)}, \frac{1}{\Phi^+(\delta + I)\log \mu}} \cdot \phi(\delta + I + \Delta_i)dI.
    \end{align*}
    We can split the integral into two parts, where $I \in [0, I_0]$ and $I \in (I_0, \infty)$ respectively. For the first part,
    \begin{align*}
        \int_{I = 0}^{I_0} \frac{\phi(\delta + I + \Delta_i)}{\Phi^+(\delta + I)\log \mu} dI &= \int_{I = 0}^{I_0} \frac{\phi(\delta + I)}{\Phi^+(\delta + I)\log \mu} \cdot e^{-(\delta + I)\Delta_i -\Delta_i^2 / 2} dI\\
        &\leq \frac{1}{i \log \mu} \cdot \int_{I = 0}^{I_0} \frac{\phi(\delta + I)}{\Phi^+(\delta + I)} dI\\
        &= \frac{1}{i \log \mu} \cdot \bs{-\log \Phi^+(\delta + I)}_{0}^{I_0}\\
        &= \frac{1}{i \log \mu} \cdot \log \bp{\frac{\Phi^+(\delta)}{\Phi^+(\delta + I_0)}}\\
        &= \frac{\log \log \mu}{i \log \mu}.
    \end{align*}
    For the second part, we have
    \begin{align*}
        \int_{I = I_0}^{\infty} \frac{\phi(\delta + I + \Delta_i)}{\Phi^+(\delta)} dI &= \int_{I = I_0}^{\infty} \frac{\phi(\delta + I)}{\Phi^+(\delta)} \cdot e^{-(\delta + I)\Delta_i -\Delta_i^2 / 2} dI\\
        &\leq \frac{1}{i} \cdot \int_{I = I_0}^{\infty} \frac{\phi(\delta + I)}{\Phi^+(\delta)} dI\\
        &= \frac{1}{i} \cdot \bp{\frac{\Phi^+(\delta + I_0)}{\Phi^+(\delta)}}\\
        &= \frac{1}{i \log \mu}.
    \end{align*}
    Thus, we can conclude that
    \begin{align*} \E \bs{W(I) \cdot \ind{I \geq 0}} & \leq 15.4\sqrt{e^3}(1 + \log \log \mu) \cdot \sum_{i = 1}^{\mu} \frac{1}{i\log \mu} \\
    & \leq (28.3) \sqrt{e^3}(1 + \log \log \mu),
    \end{align*}
    where we have used the fact that $\sum_{i = 1}^{\mu}\frac{15.4}{i\log \mu} \leq \sum_{i = 1}^{3}\frac{15.4}{i\log 3} \leq 28.3$ (since we have assumed that $\mu \geq 3$).
    
    Finally, with Lemma \ref{lem:xi_log_concavity} and using $e^{-\delta\Delta_i} = 1/i$ and the bounds $\Delta_i \le 1$ and $\delta \ge \log(3)$ in the fourth step, we have
    \begin{align*}
        \E\bs{I \cdot \ind{I \geq 0}} &= \frac{1}{\mu} \sum_{i = 1}^{\mu} \int_{I = 0}^{\infty}I \cdot \phi(\delta + I + \Delta_i)dI\\
        &= \frac{1}{\mu} \sum_{i = 1}^{\mu} \xi(\delta + \Delta_i)\\
        &\geq \frac{1}{\mu} \sum_{i = 1}^{\mu} \xi(\delta)e^{-\delta \Delta_i - \Delta_i^2/2}\cdot \bp{\frac{\delta}{\delta + \Delta_i}}^2\\
        &\geq \frac{1}{\mu} \sum_{i = 1}^{\mu} \frac{\xi(\delta)}{i}e^{ - 1/2}\cdot \frac{1}{3.65}\\
        &\geq \frac{\xi(\delta)\log \mu}{(3.65)\sqrt{e}\mu}.
    \end{align*}

    Thus, we can conclude that the expected growth rate is lower bounded by $\frac{\xi(\delta) \log \mu}{770\mu (1 + \log \log \mu)}$. \qed
    \end{longversion}
\end{proof}

\begin{theorem}
    \label{thm:low_bound_large}
    Let $\mu > e^{\delta}$ and let $Z \sim \mathcal{N}(-\delta, 1)$. Then $\Rc_{\mu}^{low} \geq \frac{\xi(\delta) \delta}{770\mu(1 + \log \delta)}$.
\end{theorem}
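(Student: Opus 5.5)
We follow the proof of Theorem~\ref{thm:low_bound} and apply Lemma~\ref{lem:low_bound_general}, but with the gaps truncated so that only the top $m := \lfloor e^{\delta}\rfloor$ slots carry the logarithmic profile. Concretely, set $\Delta_i = \frac{\log i}{\delta}$ for $1 \le i \le m$ and $\Delta_i = \Delta_m \le 1$ for $m < i \le \mu$. Then $\Delta_i \le 1 \le \delta$ for all $i$, provided $\delta \ge 1$; for $\delta<1$ the bound is handled as a trivial small case, e.g.\ by comparison with any crude bound. This keeps every hypothesis of Lemmas~\ref{lem:ccdf_log_concavity} and \ref{lem:xi_log_concavity} available, and $e^{\delta\Delta_i} = \min(i,m)$ up to the factor $e^{\delta}/m \le e$.

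\textbf{Numerator.} We have $\E[I\cdot\ind{I\ge0}] = \frac1\mu\sum_{i=1}^\mu \xi(\delta+\Delta_i)$. By Lemma~\ref{lem:xi_log_concavity} each term is at least $c\,\xi(\delta)/\min(i,m)$. The first $m$ indices contribute $\gtrsim \log m \approx \delta$. The remaining $\mu-m$ indices each contribute $\gtrsim 1/m$, giving another $\ge (\mu-m)/m$. For the target bound the first part suffices, which gives $\E[I\ind{I\ge0}] \gtrsim \xi(\delta)\delta/\mu$.

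\textbf{Waiting time.} As before, $W(I) \lesssim \mu/\bigl(\Phi^+(\delta)+\sum_{j}\Phi^+(\delta+I+\Delta_j)\bigr)$, where $\max_i i/\Phi^+(\delta-\Delta_i)$ is bounded by $e^{O(1)}/\Phi^+(\delta)$ also for $i>m$. This holds because $i\le\mu$ forces a constraint that we must handle. This is the main obstacle: for $i>m$, the ratio $i/\Phi^+(\delta-\Delta_m)$ grows like $i/m$, which is too large. The fix is to bound the denominator better. The sum $\sum_{j=1}^{\mu-1}\Phi^+(\delta+I+\Delta_j)$ contains $\mu-m$ terms, each at least $\Phi^+(\delta+I)/(em)$, so it is at least about $\frac{\mu}{m}\Phi^+(\delta+I)$. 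For $I$ of order $1/\delta$ or smaller, this dominates $i$ times the target rate.

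We therefore define $I_0$ by $\Phi^+(\delta+I_0)\,(\log m + \mu/m) = \Phi^+(\delta)$, so that $I_0 \lesssim \frac{\log\log m + \log(\mu/m)}{\delta}$. We then show $W(I) \lesssim \mu\cdot\min\bigl(\frac{i_{\max}}{\Phi^+(\delta)\cdot(\ldots)}, \ldots\bigr)$. If this becomes too messy, we would instead simply restrict the process to the top $m$ individuals, i.e.\ choose $\Delta_i=\Delta_m$ and require only the catch-up condition for $i\le m$. The lower individuals do not need to catch up, because Lemma~\ref{lem:sample_time} only needs $N=m$ thresholds. We would then re-derive Lemma~\ref{lem:low_bound_general} with the maximum taken over $2\le i\le m$, since the coupling Proposition~\ref{prop:low_bound_coup} only uses that we decrease fitnesses.

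With the maximum over $i\le m$, the split integral computation of Theorem~\ref{thm:low_bound} goes through verbatim with $\log\mu$ replaced by $\log m\approx\delta$. This yields $\E[W(I)\ind{I\ge0}] \lesssim (1+\log\delta)\,\frac{1}{\log m}\sum_{i\le\mu}\frac{1}{\min(i,m)}$. The tail indices $i>m$ add $(\mu-m)/m$ here, but the same surplus appears in the numerator. Taking the ratio, and using $\log m \ge \delta - 1$, gives $\Rc_\mu^{low} \ge \frac{\xi(\delta)\delta}{770\mu(1+\log\delta)}$ after tracking constants as in Theorem~\ref{thm:low_bound}.
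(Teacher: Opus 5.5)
Your overall plan (truncate the profile at $m=\lfloor e^{\delta}\rfloor$, restrict the maximum in $W(I)$ to $i\le m$, and rerun the computation of Theorem~\ref{thm:low_bound}) is the paper's plan. However, the fallback construction your final bound rests on treats the individuals of rank $i>m$ inconsistently. You exempt them from the catch-up condition, yet you keep $\Delta_i=\Delta_m$ for them. Both your numerator and your bound on $\E[W(I)\ind{I\ge 0}]$ use the distribution $\mathcal{D}$ in which these $\mu-m$ individuals sit at exactly $g^*-\Delta_m$ during the improvement phase. That is where the matching surplus $(\mu-m)/m$ comes from, and you need the surplus in the numerator to offset the one in the waiting time.

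But the modified process may only \emph{decrease} fitnesses, since that is all Proposition~\ref{prop:low_bound_coup} tolerates. An individual that has not caught up may lie far below $g^*-\Delta_m$ and cannot be moved up to it. Its position is then random and history-dependent. Consequently the improvement phase is no longer governed by $\mathcal{D}$, the cycles are not identically distributed, and an improvement from such a low individual is tiny but still triggers a full catch-up phase. Your computation controls none of this.

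Nor can you obtain the positions $g^*-\Delta_m$ by enforcing catch-up, as in your first route. For $\mu\ge m(1+\log m)$ the rank-$\mu$ term alone gives $W(I)=\Omega(\mu/\Phi^+(\delta))$ for every $I\ge 0$, hence $\E[W(I)\ind{I\ge0}]=\Omega(\mu/m)$. Meanwhile $\E[I\ind{I\ge 0}]=O(\xi(\delta)/m)$. So Lemma~\ref{lem:low_bound_general} yields only $O(\xi(\delta)/\mu)$, which up to constants is the trivial bound $\Rc_\mu\ge\xi(\delta)/\mu$ from mutating the fittest individual alone.

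The missing idea, which is exactly the paper's proof, is to set $\Delta_i=\infty$ for $i>e^{\delta}$. That is, you push every individual of rank $>m$ down to $-\infty$ at the end of each catch-up phase. This has three effects:
\begin{itemize}
\item It is a decrease, so the coupling survives.
\item The catch-up conditions for $i>m$ become vacuous, which is what legitimately restricts the maximum in $W(I)$ to $2\le i\le m$; your observation there is correct.
\item These individuals never produce an improvement. Hence $\Pr[I\ge z]=\frac1\mu\sum_{i\le m}\Pr[Z>\Delta_i+z]$ and the sum in $W(I)$ runs only over $j\le m$, both still carrying the factor $1/\mu$ from the uniform choice of parent.
\end{itemize}
No surplus then appears on either side. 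The computation of Theorem~\ref{thm:low_bound}, with $\log\mu$ replaced by $\log m$ in the harmonic sums but the prefactors $\mu$ and $1/\mu$ unchanged, gives $\E[I\ind{I\ge 0}]=\Omega(\xi(\delta)\log m/\mu)$ and $\E[W(I)\ind{I\ge 0}]=O(1+\log\log m)$.

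Separately, your remark that $\delta<1$ is a trivial case is not right. Each step raises $g^*$ by at most $(Z)_+$, so $\Rc_\mu^{low}\le\Rc_\mu\le\E[(Z)_+]=\xi(\delta)$. Yet the claimed bound tends to $+\infty$ as $\delta\downarrow 1/e$. The statement therefore needs a lower bound on $\delta$; the paper's argument, like that of Theorem~\ref{thm:low_bound}, tacitly uses $\delta\ge\log 3$, i.e.\ $m\ge 3$.
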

\begin{proof}
    We set $\Delta_i = \frac{\log i}{\delta} \leq 1$ for all $1 \leq i \leq e^{\delta}$, and $\Delta_i = \infty$ for all $i > e^{\delta}$. The rest of the proof is similar to Theorem \ref{thm:low_bound}.\qed
\end{proof}

\section{Experimental Results}\label{sec:simulations}

Evaluating the exact steady-state growth rate of the $(\mu+1)$-ES step-by-step is computationally intractable for environments with large threshold penalties $\delta$. Because the acceptance probability decays exponentially as the penalty increases, simulating the algorithm generation-by-generation would require millions or billions of evaluations per successful offspring. 

To bypass this without losing mathematical precision, we implemented a continuous-time Markov jump equivalent of the evolutionary process. Instead of simulating failures step-by-step, the algorithm mathematically ``fast-forwards'' directly to the next successful acceptance. Let $g_{min}(t)$ denote the fitness of the worst individual in the population at time $t$, and $g_j(t)$ denote the fitness of the $j$-th parent. For an offspring generated by the parent $j$ with mutation $Z \sim \mathcal{N}(-\delta, 1)$ to be accepted, the following condition must be satisfied : $\Nc(0, 1) > g_{\text{min}}(t) - g_j(t) + \delta$. We define this threshold as $T_j = g_{\text{min}}(t) - g_j(t) + \delta$. The fast-forwarding relies on the following exact sampling techniques:

\begin{enumerate}
    \item \textbf{Waiting Time Sampling:} The probability that the parent $j$ generates a successful offspring is $p_j = \Phi^+(-T_j)$. The total probability of generating a successful offspring in a single generation is $P_{acc} = \frac{1}{\mu} \sum_{j=1}^\mu p_j$. The number of failed evaluations skipped before a success is sampled exactly using an exponential distribution with rate $P_{acc}$.
    \item \textbf{Truncated Tail Sampling:} Once a success occurs, the responsible parent $j$ is selected proportionally to its conditional probability of success: $\Pr(\text{Parent } j \mid \text{Success}) = p_j / (\sum_{k=1}^\mu p_k)$. The accepted mutation step is then sampled exactly from the right tail of the normal distribution, conditioned on $Z > T_j$, using Marsaglia's technique~\cite{marsaglia1964generating}.
\end{enumerate}

This methodology completely eliminates the computational bottleneck of low acceptance rates, allowing us to empirically measure the exact steady-state growth rate across extreme parameter regimes. For each parameter configuration, the simulation is run from an initial state of all zeros until the maximum true genotype in the population crosses a massive threshold ($g > 1000$), ensuring that the empirical average stabilises in the steady state.

\subsection{Noiseless Translation-Invariant Regime}

To evaluate the predictive accuracy of our theoretical model, we first simulated the steady-state growth rate in the noiseless environment over a wide range of population sizes, up to $\mu = 1024$. We tested four different penalty environments: $\delta \in \{2.5, 5.0, 10.0, 20.0\}$. 

Our theorems establish that the growth rate is on the order of $\frac{\log^{1 + o(1)} \mu}{\mu}$. In addition to the fact that $\Rc_1 = \xi(\delta)$, we compare the experimental results with the most plausible theoretical growth rate of $H_{\mu}/\mu$, where $H_\mu = \sum_{i=1}^\mu \frac{1}{i} \approx \log n + 0.57721$. Additionally, to demonstrate structural convergence, we normalise the empirical growth rates $R_\mu$ by $\xi(\delta)$ and overlay them against the idealised normalised curve $H_\mu/\mu$.

\begin{shortversion}
\begin{figure}[htbp]
    \centering
    \includegraphics[width=0.45\textwidth]{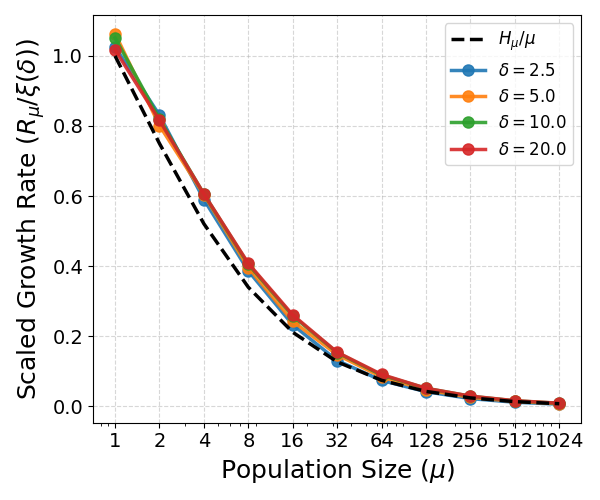}
    \includegraphics[width=0.45\textwidth]{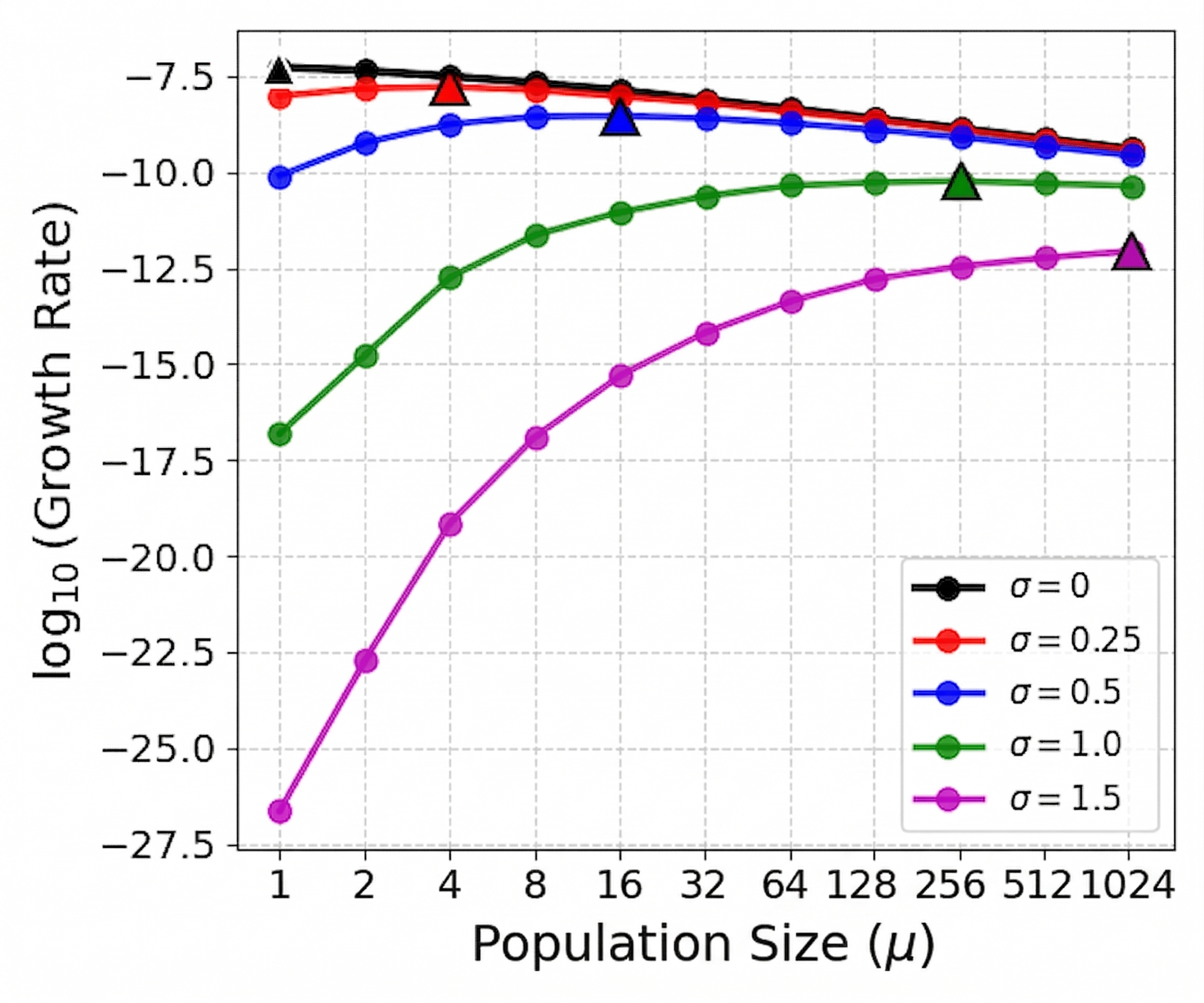}
    \caption{Scaled steady-state growth rate $R_\mu / \xi(\delta)$ of the $(\mu+1)$-ES across varying population sizes. \emph{Left:} The noiseless case $\sigma=0$. The empirical dynamics collapse perfectly onto the theoretical idealised bound $H_\mu/\mu$, regardless of the penalty severity $\delta$. \emph{Right:} Varying noise levels $\sigma$ for fixed $\delta = 5.0$ . While severe noise heavily penalizes small populations, larger populations naturally recover parallel asymptotic scaling. Maximum for each curve indicated by triangle.}
    \label{fig:noiseless_collapse}
\end{figure}
\end{shortversion}

\begin{longversion}
    \begin{figure}[htbp]
    \centering
    \includegraphics[width=0.85\textwidth]{figures/Noiseless.png}
    \caption{Scaled steady-state growth rate $R_\mu / \xi(\delta)$ of the noiseless $(\mu+1)$-ES across varying population sizes. The empirical dynamics collapse perfectly onto the theoretical idealised bound $H_\mu/\mu$, regardless of the penalty severity $\delta$. Maximum for each curve is indicated by a triangle.}
    \label{fig:noiseless_collapse}
\end{figure}
\end{longversion}

As shown in Figure~\ref{fig:noiseless_collapse} (left), the empirical results validate the theoretical framework. By scaling the growth rate by $\xi(\delta)$, the curves for all $\delta$ values collapse into a single universal trajectory that closely traces the idealised $H_\mu/\mu$ bound.

\subsection{Impact of Evaluation Noise}

We then extended our empirical evaluation to the noisy regime, where fitness evaluations are corrupted by Gaussian noise with variance $\sigma^2$. We fix the penalty at $\delta = 5.0$ and simulate the process for varying noise levels $\sigma \in \{0.5, 1.0, 2.0\}$, comparing them against the noiseless baseline ($\sigma = 0$).

\begin{longversion}
    \begin{figure}[htbp]
    \centering    
    \includegraphics[width=0.85\textwidth]{figures/Noisy.png}
    \caption{
Log-log plot of the steady-state growth rate vs. population size for $\delta = 5.0$ under varying noise levels $\sigma$. While severe noise heavily penalizes small populations, larger populations naturally recover parallel asymptotic scaling.}
    \label{fig:noisy_comparison}
\end{figure}
\end{longversion}

The results, plotted on a logarithmic scale in \begin{longversion}
    Figure~\ref{fig:noisy_comparison},
\end{longversion} 
\begin{shortversion}
    Figure~\ref{fig:noiseless_collapse} (right),
\end{shortversion} 
illustrate the non-monotonic dependence on the population size $\mu$. For small population sizes, large noise levels ($\sigma = 2.0$) trigger a catastrophic drop in the steady-state growth rate, spanning several orders of magnitude below the noiseless baseline. However, as the population size $\mu$ increases, the gap rapidly closes. This leads to a peak in the growth rate for a population size of $\mu > 1$. We also observe that this optimal $\mu$ is greater when the variance in the noise is larger.

\section{Conclusion}
In this paper, we have studied a new approach to model complex optimisation problems by representing fitness progress abstractly without explicit reference to a fitness function or a procedure for generating offspring. We investigate the noiseless case and develop novel approaches to upper and lower bound the expected growth rate in the fitness functions. When the offspring distribution is a mean shifted Gaussian $\Nc(-\delta, 1)$, we show that any non-trivial population $\mu \le e^{\delta}$ incurs a penalty factor of $\frac{\log^{1 + o(1)} \mu}{\mu}$. We also perform experiments to validate our results. Additionally, we also simulate the general noisy case to show that when there is non-zero noise, the optimal population size can be non-trivial.

The natural next step would be to investigate the noisy version of the model. Most intriguingly, does a non-trivial optimal population size exist, is it a true optimum in the long run or are we just observing a transient effect for small enough time periods? Moreover can we quantify the optimum analytically for a given offspring and noise distribution? One can also ask about the slow down in the growth rate due to noise, and to whether this slow down disappears for large enough population size.

\begin{credits}
\subsubsection{\ackname} This project was supported by the Swiss National Science Foundation [grant number 0003390]. The authors would also like to thank Esteban Real for the idea of the model.
\end{credits}


%
%
\bibliographystyle{splncs04}
\bibliography{refs}

\end{document}